\title{Regular Games -- an Automata-Based General Game Playing Language}
\author{
    Radosław Miernik,
    Marek Szykuła
    Jakub Kowalski,\\
    Jakub Cieśluk,
    Łukasz Galas,
    Wojciech Pawlik
}
\newtheorem{theorem}{Theorem}
\newtheorem{definition}{Definition}
\newtheoremstyle{TheoremNum}
        {\topsep}{\topsep}              
        {\itshape}                      
        {}                              
        {\bfseries}                     
        {.}                             
        { }                             
        {\thmname{#1}\thmnote{ \bfseries #3}}
    \theoremstyle{TheoremNum}
\newcommand{\Rust}[0]{\lstinline}
\lstdefinelanguage{RG}{
    morecomment=[l]{//},
    literate=
        {DOLLAR}{$\mbox{\color{teal}\texttt{\$}}$}{1} 
        {UNDERSCORE}{\_}{1}
        {_1}{$\textsubscript{1}$}{1}
        {_2}{$\textsubscript{2}$}{1}
        {_i}{$\textsubscript{i}$}{1}
        {_n}{$\textsubscript{n}$}{1}
        {->}{$\rightarrow$}{1}
        {:=}{$\prec$}{1}
        {...}{$\dots$}{2}
}
\lstdefinestyle{RG}{
    language={RG},
    aboveskip={3pt},
    belowskip={3pt},
    commentstyle=\color{gray!80!},
    keywords={const,type,var},
    keywordstyle=\color{blue},
    keywords=[2]{begin,end,p0,p1,p2,q1,q2,q3,r1,r2, a, b, c, d, a1, a2, a3, target, b0, b1, b2, turn, move, _w, _u, t1, t2, t3, x0, x1, x2, y0, y1},
    keywordstyle=[2]\color{purple},
    mathescape=true,
}
\newcommand{\RG}[0]{\lstinline[style=RG]}
\lstdefinelanguage{HRG}{
    morecomment=[l]{//},
    literate=
        {DOLLAR}{$\mbox{\color{teal}\texttt{\$}}$}{1} 
        {->}{$\rightarrow$}{1}
        {:=}{$\prec$}{1}
        {...}{$\dots$}{2}
}
\lstdefinestyle{HRG}{
    language={HRG},
    aboveskip={3pt},
    belowskip={3pt},
    commentstyle=\color{gray!80!},
    keywords={branch,domain,forall,graph,if,in,loop,reusable,where,while},
    keywordstyle=\color{blue},
    keywords=[2]{check,end,not,reachable,return},
    keywordstyle=[2]\color{purple},
    mathescape=true,
}
\begin{document}

\maketitle

\begin{abstract}
We propose a new General Game Playing (GGP) system called Regular Games (RG).
The main goal of RG is to be both computationally efficient and convenient for game design.
The system consists of several languages.
The core component is a low-level language that defines the rules by a finite automaton.
It is minimal with only a few mechanisms, which makes it easy for automatic processing (by agents, analysis, optimization, etc.).
The language is universal for the class of all finite turn-based games with imperfect information.
Higher-level languages are introduced for game design (by humans or Procedural Content Generation), which are eventually translated to a low-level language.
RG generates faster forward models than the current state of the art, beating other GGP systems (Regular Boardgames, Ludii) in terms of efficiency.
Additionally, RG's ecosystem includes an editor with LSP, automaton visualization, benchmarking tools, and a debugger of game description transformations.
\end{abstract}

\begin{links}
    \link{Code}{https://github.com/radekmie/rg}
    \link{Compiler}{https://github.com/WoojtekP/RGcompiler}
\end{links}

\section{Introduction}
    Generalization is a natural path of development for Artificial Intelligence solutions. 
    Achieving mastery in a small domain leads to transplanting the idea to other problems and eventually bringing it all together to cover a generalized domain using a unified approach. 
    DeepMind's AlphaGo project is an excellent illustration of this process for two-player zero-sum board games.
    From the first versions of AlphaGo~\cite{Silver2016Mastering}, which initialized learning by relying on human expert positions, through AlphaGo Zero~\cite{silver2017mastering}, which used only self-play reinforcement learning, and AlphaZero \cite{silver2018general}, which followed the same approach to master Chess and Shogi too, to MuZero~\cite{schrittwieser2020mastering}, being able to learn environment dynamics and incorporate Atari games as well.
    
    A common issue when generalizing is a degradation in solution quality and computational performance.
    Such a barrier was one of the reasons the General Game Playing (GGP) based on Game Description Language (GDL) \cite{Genesereth2005General,Love2008General}, which initially flourished with many new achievements, started to lose popularity.
    The proposed logic-based system, although expressive and with a clean design, was computationally inefficient due to the forced logic resolution required to play games \cite{Sironi2016Optimizing}.
    Furthermore, it was difficult to encode games with complex rules, due to the need of implementing everything from scratch, ultimately leading to lengthy descriptions that were expensive for reasoning.
    This problem was even more evident in the case of GDL-II \cite{Thielscher2010AGeneral}, an extension of GDL designed to handle games with imperfect information and randomness.
    
    This issue sparked an arms race in designing new GGP formalisms, aiming to be fast, powerful in expressibility, and easy to use for game creation.
    The most advantageous in this regard were Ludii~\cite{Piette2020Ludii} and Regular Boardgames (RBG)~\cite{Kowalski2019RegularBoardgames}, which represent opposing principles.
    Ludii is a rich language that encodes many game features (called \emph{ludemes}) as parts of the language, which simplifies game descriptions and allows for the direct optimization of these features' implementation.
    RBG aims to be small, simple, and efficient; it is designed for board games, and its descriptions are compiled to C++.

    \subsection{Related Work}
        General Game Playing has been formed as a concretization of Artificial General Intelligence dedicated to games, following the belief that an intelligent agent should be able to adapt and successfully play any given game, not just one it was tailored to.
        The origins of this domain go back to \cite{Pitrat68}.
        Public attention and establishment of GGP as a fully-fledged research area begins with publishing GDL and starting the annual International General Game Playing Competition (2005--2016), initially co-located with the AAAI conference \cite{Genesereth2005General}.
        
        Typically, each GGP system introduces a domain-specific language that formally describes a family of games in a human- and machine-processable way, with explicit execution semantics.
        Then, to play a given game, the agents are either provided with its rules in this language or with its forward model, allowing them to simulate the game course.
        An alternative approach to GGP is to implement games in a standard programming language and use them in agents through a common interface.
        These generalizable approaches gain special attention from the Reinforcement Learning community, as they provide an excellent benchmarking domain.
        Examples of such generalized systems using game-specific implementations are OpenSpiel \cite{LanctotEtAl2019OpenSpiel}, Polygames \cite{Polygames}, GBG \cite{Konen2019GBG}, and Ai~Ai \cite{TavenerAiAi}.
        
        Most of the GGP languages support narrow classes of games, as they are designed to push research in these concrete areas. 
        Good examples are Video Game Description Language, representing Atari-like games \cite{Perez2016General}, and Ludi, which covers a specific subset of combinatorial games allowing easy procedural generation \cite{Browne2010Evolutionary}.
        Multiple languages exist for the generalization of chess-like games, including foremost METAGAME \cite{Pell1992METAGAME}, and Simplified Boardgames \cite{Bjornsson2012Learning}.
        
        On the other hand, a few GGP systems aimed for a real generalization and to describe universal domains.
        The aforementioned GDL is able to describe any turn-based, finite, deterministic, $n$-player game with simultaneous moves and perfect information \cite{Love2008General}.
        Its extension, GDL-II, also allows for expressing games with randomness and incomplete state knowledge \cite{Thielscher2010AGeneral}.
        Both languages are based on the standard syntax and semantics of Logic Programming, which makes language definition very minimal (based only on a few keywords) and convenient for many tasks, but as the required logic resolution is computationally expensive, it makes large games unplayable.
        
        Ludii is a GGP language created to encode all traditional strategy games throughout recorded human history as a part of the Digital Ludeme Project \cite{Piette2020Ludii}.
        It is based on high-level game-specific language constructions, and can encode finite non-deterministic and imperfect-information games \cite{Soemers2024LudiiUniversal}.
        Ludii incorporates game concepts directly into the language.
        Consequently, it becomes very complex with a few thousand keywords {\cite{LudiiLanguageReference}}.
        Yet, the game descriptions are relatively short, and it is easy to add new games or rule variants that use concepts already present in the language.
        Ludii has a large database of (primarily board) games and handles imperfect information and randomness.
        However, the involvement makes it usable only within the Ludii software developed in Java (which is source-available, though not open-source).
        It also aims to be efficient, containing many optimizations (e.g., bit-boarding), and is faster than the fastest GDL reasoners (based on propositional networks \cite{Sironi2016Optimizing}).
        
        Regular Boardgames (RBG) aims to be minimal, following the principle of GDL.
        It handles only perfect information deterministic rules (e.g., Chess, Draughts, Gomoku) and is primarily dedicated to board games, assuming the existence of one board and auxiliary arithmetic variables.
        The rules are encoded as a regular expression defining the language of possible game plays.
        RBG uses a compiled approach, where given rules are translated to a highly optimized reasoner module in C++, providing a common interface for traversing the game tree.
        In this matter, RBG provides the fastest automatically generated reasoners, yet complex games (like Chess) suffer from lengthy descriptions and long compilation times.
        RBG outperforms Ludii in terms of efficiency by an order of magnitude \cite{Kowalski2020EfficientReasoning} and so far is the fastest GGP language.

    \subsection{Our Contribution: Regular Games System}
        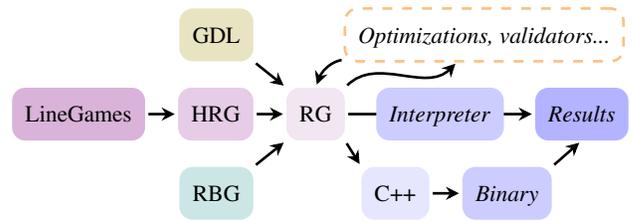
\begin{figure}
            \centering
            \resizebox{\linewidth}{!}{
                \begin{tikzpicture}[font=\scriptsize]
                    \tikzset{every node/.style={minimum height=16pt, inner sep=4pt, outer sep=0, rounded corners, text centered, text depth=0}}
                    \tikzset{arrow/.style={draw, -{Stealth[length=4pt,width=4pt]}, thick, shorten <= 1pt, shorten >= 1pt}}
            
                    \node [fill=violet!10] (rg) {RG};
        
                    \node [fill=olive!20,  above left=8pt and 10pt of rg] (gdl) {GDL};
                    \node [fill=violet!20,       left=        10pt of rg] (hrg) {HRG};
                    \node [fill=teal!20,   below left=8pt and 10pt of rg] (rbg) {RBG};
        
                    \node [fill=violet!30, left=10pt of hrg] (lg) {LineGames};
    
                    \path [arrow] (lg.east) -- (hrg.west);
        
                    \node [fill=blue!10, below right=8pt and  5pt of         rg] (cpp_source) {C++};
                    \node [fill=blue!20,       right=         10pt of cpp_source] (cpp_binary) {\textit{Binary}};
                    \node [fill=blue!30, above right=8pt and -5pt of cpp_binary] (results)    {\textit{Results}};
        
                    \path [arrow] (gdl.south east) -- (rg.north west);
                    \path [arrow] (hrg) -- (rg);
                    \path [arrow] (rbg.north east) -- (rg.south west);
            
                    \path [arrow] (rg.south east) -- (cpp_source.north west);
                    \path [arrow] (cpp_source) -- (cpp_binary);
                    \path [arrow] (cpp_binary.north east) -- (results.south);
                    \path [arrow] (rg) edge node[fill=blue!20] {\textit{Interpreter}} (results);
        
                    \node [draw=orange!50, dashed, thick, above right=8pt and 0pt of rg] (optimizations) {\textit{Optimizations, validators...}};
                    \draw [arrow] (rg.north east) to [out=45, in=-135] (optimizations);
                    \path [arrow] (optimizations) to [out=190, in=70] (rg.north);
                \end{tikzpicture}
            }
            \caption{Regular Games ecosystem.}
            \label{fig:ecosystem}
        \end{figure}
    
    We present a GGP system that combines the best features of the ones mentioned above.
    It consists of separate levels of abstraction, realized through more than just one language.

    \subsubsection{Low-level Language}
        \emph{Regular Games} (RG) is our base language, which is minimal and involves only a few simple elements and mechanisms.
        This makes it easy for automatic processing by, e.g., agents, optimizations, and rule analysis.
        
        The language is universal for the class of turn-based games with imperfect information and randomness.
        The rules are described by a nondeterministic finite automaton (NFA) (or a directed graph with edge labels).
        This representation is both more flexible and theoretically (exponentially) more succinct \cite{GH2008RegularExpressionSize} than regular expressions that RBG is based on.
        Following the GDL principle, RG does not have built-in concepts like a board, arithmetic, or predefined move forms.
        Instead, it operates only on abstract symbols (names) and compound types.
        Consequently, the representation of moves is also flexible, being an arbitrary sequence of symbols.
        
        RG's ecosystem (Fig.~\ref{fig:ecosystem}) includes an industry-grade code editor with LSP, automaton visualization, benchmarking tools, and a debugger of game description transformations.
        It includes an optimization pipeline that performs analysis, modifies the shape of the rules, and infers additional information for improving the efficiency of reasoning.
        
        Like RBG, RG is compiled into a C++ reasoner module which can be a part of any larger program (e.g., an agent or a match referee).
        
        \subsubsection{High-level Languages}
        
        The primary higher-level language is \emph{High-level Regular Games} (HRG) -- a more friendly language for game design, both by humans and automatically (via procedural content generation).
        HRG is equipped with numerous convenience mechanisms and describes the rules using typical declarative and structural programming constructs, while still remaining fully general, i.e., suitable for arbitrary games.
        It is effectively translated to low-level RG.
        
        Above the HRG, there are specialized frameworks for particular game types.
        This part adapts Ludii's approach by hardcoding common game concepts for being reused in similar games.
        We developed an example of such a framework, supporting Alquerque-like games on line boards.
        Instead of developing original syntax, this framework is a Python library that allows defining game rules in a few lines.
        It generates HRG code, which can be further modified to include less standard elements if necessary.

        We also developed automatic translations of RBG to RG and GDL to RG (experimental).
        In this way, RG can serve as the target language for many high-level languages, providing them efficiency, uniform target representation, and tooling.
        
        The efficiency of RG outperforms RBG in all games implemented in HRG (handmade or generated) and in part of the games automatically translated from RBG.
        Consequently, RG is also typically 10--20 times faster than Ludii.

\section{Regular Games Language}
    \subsection{Full Definition}
        A \emph{regular game} description consists of a set of \emph{type aliases}, a set of \emph{variables}, a set of \emph{constants}, and a list of \emph{edges} defining the \emph{rules automaton}.
        The order of elements is arbitrary.
    
        Every \emph{name} (of e.g., variable) is a case-sensitive string consisting of alphanumeric characters and an underscore without leading digits.
        There are three names reserved for language keywords: \RG!const!, \RG!type!, and \RG!var!.
    
        \subsubsection{Types and values}
            A \emph{type} is either a finite \emph{set type}, whose domain contains \emph{symbols} (denoted as \RG!{s_1,...,s_n}!), or an \emph{arrow type}, whose domain contains \emph{maps} where keys are of the source type and the values are of the destination type (denoted as \RG!t_1->t_2!, where \RG!t_1! and \RG!t_2! are the source and destination types respectively).
            Only set types can be used as the source in arrow types.
    
            We define type \emph{equality} (denoted as \RG!T=U!).
            Set types are equal when their domains are equal.
            Map types are equal when their source and destination types are equal.
    
            We define type \emph{assignability} (denoted as \RG!U:=T!).
            Set types are assignable if their domains have at least one common element.
            Map types \RG!T_1->T_2! and \RG!U_1->U_2! are assignable if \RG!T_1:=U_1! and \RG!T_2:=U_2!.
            Note that both \RG!=! and \RG!:=! are commutative.
    
            A \emph{value} is either a \emph{symbol} (denoted as \RG!s!) or a \emph{map} of key-value pairs supplied with a default value (denoted as \RG!{:v,s_1:v_1,...,s_n:v_n}!, where \RG!k_i! are symbols used as keys, \RG!v_i! are values, and \RG!v! is the default value).
            For all non-specified keys, the map contains the default value.
    
        \subsubsection{Type aliases, constants, and variables}
            A \emph{type alias} allows reusing types and can be used interchangeably with the type they refer to (denoted as \RG!type T=t;!, where \RG!T! is the name, \RG!t! is the type it aliases).
            Recursive type aliases are forbidden.
    
            \begin{lstlisting}[style=RG, gobble=16]
                type Coord = {0, 1, 2};
                type Piece = {e, X, O};
                type ColumnOfBoard = Coord -> Piece;
                type Board = Coord -> ColumnOfBoard;
            \end{lstlisting}
    
            A \emph{constant} is an invariable value of some type (denoted as \RG!const C:T=v;!, where \RG!C! is the name, \RG!T! is a type, and \RG!v! is the value).
            Constants can reference other constants, though recursive values are forbidden.
    
            \begin{lstlisting}[style=RG, gobble=16]
                const next: Coord -> Coord = {0:1, 1:2, :0};
                const initColumn: ColumnOfBoard = {:e};
                const initBoard: Board = {:initColumn};
            \end{lstlisting}
    
            A \emph{variable} is a container for a value of some type that can change between the game states (denoted as \RG!var V:T=v;!, where \RG!V! is the name, \RG!T! is the type, and \RG!v! is the initial value).
    
            \begin{lstlisting}[style=RG, gobble=16]
                var posX: Coord = 0;
                var board: Board = initBoard;
            \end{lstlisting}
    
        \subsubsection{Expressions}
            An \emph{expression} defines a way to evaluate a value given a \emph{variables assignment} (mapping from variable names to their values).
            There are three types of expressions:
            \\$\bullet$\;A \emph{reference} to a constant, a variable, or a symbol (denoted by its name).
            Its type is inferred from the referenced value except for the symbol, whose inferred type is \RG!{s}! (a singleton set type of itself).
            \\$\bullet$\;An \emph{access} is a compound of two expressions (denoted as \RG!e_1[e_2]!).
            The type of the access expression is the destination type of the type of \RG!e_1!.
            In a proper game, \RG!e_1! is of an arrow type, and the type of \RG!e_2! is assignable to \RG!e_1!'s source type.
            \\$\bullet$\;A \emph{cast} is a compound of a type and an expression (denoted as \RG!T(e)!).
            The type of a cast expression is \RG!T!.
            Map types are cast recursively -- if \RG!T=T_1->T_2!, then \RG!e!'s keys are cast to \RG!T_1! and \RG!e!'s values are cast to \RG!T_2!.
            In a proper game, the type of \RG!e! is assignable to \RG!T!, as well as that all cast symbols belong to the set types they were cast to.
    
            \begin{lstlisting}[style=RG, gobble=16]
                board                 // Board
                board[posX]           // ColumnOfBoard
                board[Coord(posX)][1] // Piece
            \end{lstlisting}
    
        \subsubsection{Rules automaton}\label{subsec:rulesautomaton}
            A \emph{rules automaton} is a pair $(Q,\delta)$, where $Q$ is the (finite) set of \emph{nodes} and $\delta\colon Q \times \mathit{Actions} \to Q$ is the transition function, and $\mathit{Actions}$ is the set of possible actions defined later.
            The elements of $Q$ are states of the automaton, which are called \emph{nodes} to avoid confusion with game states.
            $Q$ always contains \RG!begin! and \RG!end! nodes that have a special meaning.
        
            \subsubsection{Game state}
                A \emph{game semistate} $\mathcal{S}$ is an assignment for all variables.
                The \emph{initial game semistate} $\mathcal{S}_\mathrm{I}$ is the one where all variables take the initial values from their definitions.
    
                A \emph{game state} is a pair $(\mathcal{S},q)$, where $\mathcal{S}$ is a game semistate and $q \in Q$ is a node.
                The \emph{initial game state} is $(\mathcal{S}_\mathrm{I},$\,\RG!begin!$)$.
    
            \subsubsection{Paths}
                A \emph{node} is an arbitrary name.
                A \emph{transition} in $\delta$ is a $3$-tuple $(q_1,q_2,a)$, where $q_1,q_2 \in Q$, and $a$ is an \emph{action}, which labels the transition.
    
                For a game semistate $\mathcal{S}$, an action $a$ can be \emph{legal} or not.
                A legal action can be \emph{applied}, which results in a modified game semistate denoted as $\mathcal{S}\cdot a$.
    
                For a game state $(S,q)$, a transition $(q_1,q_2,a)$ is \emph{legal} if $q=q_1$ and $a$ is legal for $S$.
                Then the resulting game state is $(S\cdot a,q_2)$.
                We also say that the action $a$ is \emph{legal} for the semistate $S$.
                A \emph{legal walk} for $(S,q)$ is a sequence of legal transitions (edges) for consecutively resulting game states.
                
                An action is \emph{valid} for $S$ if its legality can be correctly computed.
                The legality of invalid actions is undefined.
                For the purpose of efficiency, the validity of actions does not have to be checked, but it should be guaranteed in a proper description (defined later).
    
            \subsubsection{Actions}
                There are five types of actions.
                \\$\bullet$\;The \emph{empty action} (no denotation).
                It is always legal and valid, and does not affect the semistate.
                \begin{lstlisting}[style=RG, gobble=20]
                    q1, q2:;
                \end{lstlisting}
                $\bullet$\;A \emph{comparison}, which is either an equality or an inequality of two expressions \RG!e_1! and \RG!e_2! (denoted as \RG!e_1==e_2! and \RG|e_1!=e_2| respectively).
                It is legal if the resulting values from the evaluated expressions are the same or distinct, respectively.
                It is valid if the type of \RG!e_1! is assignable to the type of \RG!e_2!.
                \begin{lstlisting}[style=RG, gobble=20]
                    q1, q2: board[posX][1] == e;
                    r1, r2: next[posX] != 2;
                \end{lstlisting}
                $\bullet$\;An \emph{assignment} (denoted as \RG!e_1=e_2!), which sets the value represented by \RG!e_1! to the value evaluated from \RG!e_2!.
                It is legal and valid if the type of \RG!e_2! is assignable to the type of \RG!e_1!, as well as all assigned symbols belong to the set types they were assigned to.
                \begin{lstlisting}[style=RG, gobble=20]
                    q1, q2: board[posX][1] = x;
                    r1, r2: posX = next[posX];
                \end{lstlisting}
                $\bullet$\;A \emph{reachability check}, which verifies a complex condition.
                The reachability check specifies two nodes \RG!q1!, \RG!q2! $\subseteq Q$ and is denoted as \RG!?q1->q2! or \RG{!q1->q2}.
                In the first case, the action is legal for a semistate $S$, if there exists a legal walk from $(S,$\RG!q1!$)$ to $(S',$\RG!q2!$)$ for some semistate $S'$.
                The second case is the negation; hence, it is legal if such a legal walk does not exist.

                The following example is taken from Tic-Tac-Toe:
                \begin{lstlisting}[style=RG, gobble=20]
                    p1, p2: ? q1 -> q2; // Any empty square?
                    r1, r2: ! q1 -> q2; // No empty squares?
                    q1, q2: board[0][0] == e;
                    q1, q2: board[0][1] == e;
                    // ...
                    q1, q2: board[2][2] == e;
                \end{lstlisting}
                A reachability check outgoing from a node \RG!p1! and with a starting node \RG!q1! is valid for $S$ only if there is no legal walk from $(S,$\RG!q1!$)$ to a game state with \RG!p1!.
                Therefore, recursion is forbidden.
                \\$\bullet$\;A \textit{tag} (denoted as \RG!DOLLARs!, where \RG!s! is the tag's name).
                It is always legal and valid, and becomes a part of a \textit{move} (defined below).
                \begin{lstlisting}[style=RG, gobble=24]
                    q1, q2: DOLLAR 1;
                \end{lstlisting}
    
        \subsubsection{Special definitions}
            A handful of definitions have a special meaning and must be present in every regular game in the form specified below.
            \\$\bullet$\;\RG!keeper! and \RG!random! are symbols representing two special players.
            The former is managing the game; the latter is used to handle nondeterministic moves.
            \\$\bullet$\;\RG!begin! is a dedicated automaton state for the initial state.
            \\$\bullet$\;\RG!end! is a dedicated automaton state that, once entered, ends the game.
            In a proper game, it is always entered with a \RG!player=keeper! assignment.
            \\$\bullet$\;\RG!type Player! is a set type representing the players of the game.
            Neither \RG!keeper! nor \RG!random! are included.
            \\$\bullet$\;\RG!type Score! is a set type representing the possible outcomes of the players (usually natural numbers).

            Some definitions are built-in and can be omitted.
            When provided, they have to match the definitions below.
            \\$\bullet$\;\RG!type Goals=Player->Score! and the corresponding \RG!var goals:Goals! specifies the \emph{outcomes} of each player.
            The initial value can be chosen arbitrarily and defaults to the first symbol of \RG!Score!.
            \\$\bullet$\;\RG!type PlayerOrSystem! must be precisely the union of \RG!type Player! and \RG!{keeper,random}!.
            \\$\bullet$\;\RG!var player:PlayerOrSystem=keeper! specifies the current player.
            The \RG!keeper! always begins and ends the game.
            \\$\bullet$\;\RG!type Bool={0, 1}!.
            \\$\bullet$\;\RG!type Visibility=Players->Bool! and the corresponding \RG!var visible:Visibility! specifies for every player whether the current part of the play is visible to them.
            (The symbol \RG!1! means that it is visible.)
            The initial value can be chosen arbitrarily and defaults to \RG!1!.
    
            As most of the above are implicit, a minimal regular game description can be as follows:
    
            \begin{lstlisting}[style=RG, gobble=16]
                type Player = {x};
                type Score = {0};
                begin, end: player = keeper;
            \end{lstlisting}
    
        \subsubsection{Moves and plays}
            To build a play, the players choose their moves for the current game state.
            For a game state $(\mathcal{S},q)$, a \emph{move walk} $P$ is a legal walk whose last transition is an assignment to \RG!player! variable, and there are no other such assignments in it.
            
            Players do not specify particular move walks, but their labelings.
            A \emph{labeling} of a legal walk $P$ is the sequence of tags that occurs on the transitions of $P$, given in the same order.
            A \emph{move} is a finite sequence of names, and it is \emph{legal} if it is the labeling of a move walk.
            For the same move, there can be many move walks $P$, but in a proper game, the effect of every move walk with the same labelling must be the same, i.e., for a game state $(\mathcal{S},q)$ and a legal move $M$, every legal walk $P$ labeled by $M$ leads to the same next game state $(\mathcal{S}',q')$.
            
            A \emph{play} is the concatenation of legal moves applied sequentially to the initial game state.
            A play is \emph{completed} if it finally leads to a game state whose automaton node is \mbox{\RG!end!}.
            When the play is completed, the players' \emph{outcomes} are the values in \RG!goals!.
            A play is built in the way that the player currently assigned to \RG!player! chooses its legal move.
            There are two special \emph{system players}, which are executed by a game manager:
            \\$\bullet$\;\RG!keeper! -- it chooses any move.
            A proper game description must ensure that it always has exactly one move.
            The keeper is used to perform modifications of the game state that are not assigned to a particular real player.
            It plays a vital role in improving the efficiency, and in games with imperfect information, it can be used to reveal partial information to players.
            Typically, its move is empty (no tags).
            \\$\bullet$\;\RG!random! -- it chooses a uniformly random move from the legal ones and is used to introduce randomness into the game.
            The probability distribution can be controlled by multiplying moves and/or through a sequence of moves.
    
            In games with imperfect information, players do not necessarily see the whole moves of the others.
            An \emph{obfuscated move} for a player \RG{p} is a subsequence of a move where all the labels of the edges, such that \RG!visible[p]==0! at the moment of traversal, are removed.
            After every move of a player (including the system players), all the other players are informed of obfuscated moves for them.
    
            \subsubsection{Proper description}
                A regular game to be \emph{proper} must satisfy several conditions.
                The first condition must hold for every game state $(S,q)$ such that there is a legal walk from the initial state $(\mathcal{S}_\mathrm{I},$\,\RG!begin!$)$ by a sequence of legal actions:
                \begin{enumerate}
                    \item[1.] (Action validity) For every outgoing edge from $q$, all actions must be valid for $S$.
                    This condition ensures the behavior of actions is well defined for every game state that we may encounter during a computation.
                \end{enumerate}
                The other conditions hold for plays.
                For every play $P$, where $(S,q)$ is obtained by a legal walk labeled by $P$ from the initial game state, the following hold:
                \begin{enumerate}
                    \item[2.] (Move unambiguosity) For every move $M$ for $(S,q)$, every legal walk from $(S,q)$ labeled by $M$ leads to the same next state $(S',q')$.
                    This condition implies that every play uniquely defines a game state obtained by applying a legal walk labeled by this play.
                    \item[3.] (Continuable) If the play is not complete, then there exists at least one legal move.
                    \item[4.] (Deterministic keeper) If \RG!keeper! is on the move at $(S,q)$, then it has exactly one legal move.
                    \item[5.] (Game finiteness) There exists a finite number of plays. This ensures that there is no cycle in the game states, hence the game cannot be played infinitely.
                    Together with condition~(3), it also implies that finally every play can be extended to a complete play.
                \end{enumerate}
    
        \subsubsection{Shorthand Actions}
            In addition to the actions described in section~\ref{subsec:rulesautomaton}, there are two more that are used solely to shorten game descriptions.
            Both are optional and can be automatically expanded using their corresponding transformations (see section~\ref{subsec:transformations}).
            \\$\bullet$\;An \emph{any assignment} (denoted as \RG!e=t(*)!), which sets the value represented by \RG!e! to any value of type \RG!t!.
            It is equivalent to multiple parallel edges with a standard assignment on each.
            Therefore, in a proper game, \RG!t! is a set type, assignable to the type of \RG!e!, as well as all assigned symbols belong to the set types they were assigned to.
            \begin{lstlisting}[style=RG, gobble=16]
                // Shorthand              // Expanded
                q1, q2: posX = Coord(*);  q1, q2: posX = 0;
                                          q1, q2: posX = 1;
                                          q2, q2: posX = 2;
            \end{lstlisting}
            $\bullet$\;A \emph{variable tag} (denoted as \RG!DOLLARDOLLAR V!), which yields a tag with the current value of variable \RG!V!.
            It is equivalent to multiple parallel paths with a comparison and a tag each.
            Therefore, in a proper game, type of \RG!V! is a set type.
            \begin{lstlisting}[style=RG, gobble=16]
                // Shorthand
                q1, q2: DOLLARDOLLAR posX;
                // Expanded: p0, p1, and p2 are new nodes
                q1, p0: posX == 0; p0, q2: DOLLAR 0;
                q1, p1: posX == 1; p1, q2: DOLLAR 1;
                q1, p2: posX == 2; p2, q2: DOLLAR 2;
            \end{lstlisting}
        
        \subsubsection{Pragmas}
            Optionally, every regular game can be annotated with a list of \emph{pragmas}.
            We consider them to be implementation-specific, and their only purpose is to hint the runtime to allow faster execution.
            Pragmas do not affect the game tree in any way (it is the same as without them) and can be safely ignored.
            All pragmas start with a \RG!@! and end with a \RG!;!, which makes them trivial to ignore.
    
            \begin{lstlisting}[style=RG, gobble=16]
                // At most one outgoing action is legal.
                @disjoint p1 : q1 q2 q3;
            \end{lstlisting}
    
    \subsection{Theoretical Expressivennes}
        RG can encode any finite turn-based game with imperfect information and randomness.
        Since simultaneous moves can be modelled using imperfect information, this is the same class as for GDL-II and Ludii.
        
        \begin{theorem}
            Regular Games Language is universal for the class of all finite turn-based games, including imperfect information and with randomness, where probabilities are rational numbers.
        \end{theorem}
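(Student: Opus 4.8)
The plan is to fix a standard formalization of a finite turn-based game with imperfect information and rational chance, and then compile it almost verbatim into a rules automaton. I would take such a game to be a finite tree $T$ in which every internal node is labeled by a player from $\{1,\dots,n\}$ or by \emph{chance}, every chance node carries a distribution over its children with rational weights, every leaf carries an outcome tuple from a fixed finite set $\mathit{Sc}$ of values, and, for each player $p$, the decision nodes of $p$ are partitioned into \emph{information sets} all of whose nodes share the same set of available actions. It is convenient to reformulate the information partition as a per-edge, per-player \emph{observation}, so that two of $p$'s nodes lie in one information set exactly when their root paths carry the same sequence of $p$-observations; this is the formulation matching RG's obfuscation semantics, and under it the length of $p$'s observation history is exactly the number of moves in the play, just as $p$ receives one obfuscated move per move. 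Simultaneous-move games reduce to this case by the standard sequentialization already noted above, so it suffices to treat turn-based games. The converse inclusion --- that every proper regular game lies in this class --- is immediate from the definitions: finiteness is properness condition~(5), moves are applied sequentially, outcomes are the values of \texttt{goals}, and the obfuscated-move mechanism induces an information partition; so the work is in the forward direction.

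\textbf{Construction.} Given such a game, I would build the RG description as follows. Besides the built-ins, declare \texttt{Player} with domain $\{1,\dots,n\}$ and \texttt{Score} with domain $\mathit{Sc}$. The automaton has a node $t_v$ per tree node $v$, a node $\mathtt{fin}_v$ per leaf $v$, and further auxiliary nodes inside the gadgets below. From \texttt{begin} a single keeper edge assigns to \texttt{player} the mover of the root and moves to $t_{\mathrm{root}}$ (adapt if the root is itself a leaf or a chance node). At a decision node $v$ of player $p$ with actions $a_1,\dots,a_k$ to children $v_1,\dots,v_k$, create for each $j$ a \emph{move walk} from $t_v$ that (a) emits a private distinguishing tag $\pi_{v,j}$ with \texttt{visible} set to $0$ for all players, (b) emits a short further sequence of tags interleaved with assignments to \texttt{visible} so that, after obfuscation, every other player sees exactly the observation it is entitled to for $p$ playing $a_j$ at $v$ (the generic case being a single tag naming the information set of $v$ together with $j$, revealed to exactly those players who observe $p$'s action), and then assigns to \texttt{player} the mover of $v_j$, or \texttt{keeper} if $v_j$ is a leaf, landing at $t_{v_j}$, respectively $\mathtt{fin}_{v_j}$. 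At a chance node $v$ whose children $v_i$ have probability $a_i/b$ for a common denominator $b$, emit $b$ move walks carrying pairwise distinct private tags hidden from all players, $a_i$ of which route to $t_{v_i}$, each prefixed by the observation tags the original game grants about the outcome; since \texttt{random} picks uniformly among its legal moves and the private tags make those $b$ moves pairwise distinct labelings, $v_i$ is reached with probability exactly $a_i/b$. At $\mathtt{fin}_v$, a single keeper edge assigns to \texttt{goals} the outcome tuple of $v$, assigns \texttt{keeper} to \texttt{player}, and enters \texttt{end}, so that the keeper --- as required --- both opens and closes every play. The node set, the edge list, and the type declarations are all finite, so this is a legitimate regular game description.

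\textbf{Correctness.} I would then verify the five properness conditions. Expressions are well typed by construction, giving~(1); the automaton is a DAG faithfully copying $T$, so there are finitely many plays, which is~(5), and every non-leaf node has an outgoing action, which is~(3); \texttt{keeper} is on the move only at \texttt{begin} and at the $\mathtt{fin}_v$ nodes, each carrying exactly one move walk, and \texttt{end} is always entered immediately after an assignment of \texttt{keeper} to \texttt{player}, giving~(4); and every move's labeling determines its walk uniquely, because the automaton is tree-shaped and each gadget's branches carry pairwise distinct private tags, giving~(2). Finally I would exhibit the bijection between plays of the regular game and root-to-leaf paths of $T$ and check that it preserves, step by step, whose turn it is, the probability of every chance transition, the final value of \texttt{goals}, and each player's sequence of obfuscated moves; the last equals, by the choice of tags and \texttt{visible} masks, exactly that player's observation history, hence encodes exactly that player's information sets. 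Equivalence of the two game trees follows.

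\textbf{Main obstacle.} The tree mirroring and the duplication trick for rational chance probabilities are routine; the delicate part is the information structure. One must choose the tags and \texttt{visible} updates so that, first, the legal moves of a deciding player are in bijection with the actions of its information set and reveal nothing finer --- in particular the $b$ duplicated chance branches must be hidden from everyone, since \texttt{random} being uniform \emph{over legal labelings} would otherwise leak which copy occurred and distort both the probabilities and what the other players learn --- and, second, the obfuscated move delivered to each player after each move coincides with that player's observation in the original game, neither leaking nor losing information (an empty but present obfuscated move still signals that a move occurred, which must be reconciled with what the tree structure already makes common knowledge). Pinning this correspondence down precisely and proving it respects every player's information partition is the crux; the rest is bookkeeping. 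I would also observe that the rationality hypothesis is exactly what is needed: finitely many uniform choices can realize only rational probabilities.
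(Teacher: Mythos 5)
Your overall strategy matches the paper's: reduce from a finite extensive-form game, mirror the game tree node-for-node in the rules automaton, realize rational chance probabilities by duplicating \texttt{random}'s outgoing edges with distinguishing tags hidden from everyone, have the keeper set \texttt{goals} at leaves before entering \texttt{end}, and use tags guarded by \texttt{visible} toggles to deliver to each player exactly its information set. The properness checklist, the leaf handling, and the probability-by-duplication argument all agree with the paper.

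There is, however, a genuine gap in the decision-node gadget, and it sits exactly where you yourself locate the crux. You pack the private choice tag $\pi_{v,j}$, the observation tags owed to \emph{all other} players, and the final \texttt{player} assignment into a single move walk of the deciding player $p$. A move in RG is the full labeling of a move walk, chosen by the player on move; obfuscation only governs what others are \emph{told afterwards}. Hence every tag on that walk is part of the move $p$ must select, and the set of legal labelings at $t_v$ exposes all of them. But if $v$ and $v'$ lie in the same information set of $p$, the children $v_j$ and $v'_j$ reached by the same action generally lie in \emph{different} information sets of some other player $r$, so the observation tag owed to $r$ differs between the two walks; likewise your $\pi_{v,j}$ is indexed by the node $v$ rather than by $p$'s information set. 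Either way the legal move sets at $t_v$ and $t_{v'}$ differ, so $p$ can distinguish states it must not distinguish --- and cannot even form its intended move without knowing which of $v,v'$ it occupies, since one of the two candidate labelings is illegal at the actual node. The paper resolves precisely this by splitting the path: the deciding player's move walk carries only a tag indexed by $p$'s information set of the chosen child and ends immediately with \texttt{player = keeper}; the keeper, which has exactly one legal move and therefore chooses nothing, then deterministically emits the per-player observation tags as a function of the actual state and hands control to the next mover (or to \texttt{random}). Without this interposed keeper move or an equivalent device, the construction does not preserve the information partition, so the correspondence you defer to "pinning down" is not bookkeeping but the missing step.
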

        \begin{proof}[Proof idea]
            The proof provides a reduction from a game in this class given in the \emph{extensive form} \cite{Rasmusen1994Games} to an equivalent RG description.
            Full proof in the Appendix.
        \end{proof}
        
        The theoretical computational complexity depends on the succinctness of game states.
        Let the \emph{type length} be the number of arrows plus one in the type definition.
        Set types have type length $1$.
        If the type length is fixed, game states have a polynomial size.
        
        In general, the size of a game state can be exponential in the length of the game description.
        Computing one move can be as hard as traversing the whole game tree.
        Consequently, the representative problem of determining if a player has a legal move and all the problems verifying the conditions of a proper description are EXPSPACE-complete.
        
        \begin{theorem}
            Given a game description in Regular Games, the problem of deciding whether from the initial game state there is a legal move is EXPSPACE-complete.
            The same holds for verifying conditions (1)--(5) of a proper game description.
            If the maximum type length is fixed, then these problems become PSPACE-complete.
        \end{theorem}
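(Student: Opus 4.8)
The plan is to prove matching upper and lower bounds in both regimes at once: a single construction, suitably scaled, will handle the unrestricted case ($\mathrm{EXPSPACE}$) and the fixed-type-length case ($\mathrm{PSPACE}$), and the same ideas serve for the legal-move problem and for each of the five conditions of a proper description.

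\emph{Membership.} I would first bound the objects involved. A game semistate assigns to each variable a value whose type has type length $k$; such a value is a nested map of depth $k$ whose branching is bounded by the sizes of the set types used as sources, so it is stored in space $2^{O(n\log n)}$ for a description of length $n$, and in $\mathrm{poly}(n)$ space whenever $k$ is bounded by a constant; a game state, and a pair of game states, obey the same bound. The key point is that each problem is a (possibly complemented) reachability question over the implicitly represented graph of game states: ``is there a legal move'' asks for a move walk out of $(\mathcal{S}_\mathrm{I},\,\mathtt{begin})$; condition~(1) asks whether some reachable game state has an outgoing invalid action; condition~(3) and the ``$\ge 1$ move'' part of~(4) ask whether some reachable game state has no legal move; and condition~(2), the ``$\le 1$ move'' part of~(4), and condition~(5) ask for a reachable state admitting two move walks that differ in endpoint, resp.\ in labeling, resp.\ a reachable cycle from which the play structure produces unboundedly long plays---the two-walk searches being run in the product of two copies of the graph, synchronized on tags. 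Legality of a single transition is decidable in space polynomial in the semistate size, except for a reachability-check action, which spawns a nested reachability search with the outgoing node added to a set of forbidden nodes; since re-entering a node is disallowed, this recursion has depth at most $|Q|=\mathrm{poly}(n)$ and costs only a polynomial space factor. As such a search keeps only boundedly many game states in memory and its length can be bounded, each problem lies in $\mathrm{NSPACE}(2^{O(n\log n)})$, hence in $\mathrm{EXPSPACE}$ by Savitch's theorem together with closure under complement, and in $\mathrm{NSPACE}(\mathrm{poly}(n))=\mathrm{PSPACE}$ when the type length is fixed.

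\emph{Hardness.} For $\mathrm{EXPSPACE}$-hardness I would reduce from acceptance of a deterministic, always-halting Turing machine $M$ running in space $2^n$ on an input $x$ with $|x|=n$. The crux is to encode an exponentially long tape in a polynomial-size description: with $\mathtt{Bit}=\{0,1\}$, make the tape a variable of type $\mathtt{Bit}\to\mathtt{Bit}\to\cdots\to\mathtt{Cell}$ with $n$ arrows---a complete binary tree of depth $n$ with $2^n$ leaves, of type length $n+1$ and description size $O(n)$---store the head position in $n$ separate $\mathtt{Bit}$ variables $h_1,\dots,h_n$ so that the head cell is the expression $\mathtt{tape}[h_1][h_2]\cdots[h_n]$, and keep the control state in one more variable. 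The rules automaton writes $x$ into the first $n$ leaves by a fixed block of $n$ assignments with constant addresses and then repeatedly runs a $\mathrm{poly}(n)$-size gadget for one step of $M$ (read the head cell; branch on the state/symbol pair via mutually exclusive comparisons; write the new symbol; update the state variable; move the head by a ripple-carry over $h_1,\dots,h_n$); because $M$ halts and distinct configurations give distinct game states, the simulation is acyclic. The automaton's ending is then wired per target property: for the legal-move problem it performs one assignment to \texttt{player} on acceptance and reaches a dead node on rejection, so a legal move out of \texttt{begin} exists iff $M$ accepts; for game finiteness it enters, on acceptance, a node with a tagged self-loop plus a returning \texttt{player} edge, yielding infinitely many distinct plays; and the remaining conditions are handled by analogous endings (a non-\texttt{end} dead node for continuability, a type-violating symbol assignment for action validity, a same-labeling/different-effect branch for move unambiguity, a keeper node with two distinct moves for the deterministic keeper), using that $\mathrm{EXPSPACE}$ is closed under complement where the natural reduction targets the negated property. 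For $\mathrm{PSPACE}$-hardness with type length bounded by a small constant, the identical scheme is run for a polynomial-space machine: an explicit set type $\mathtt{Addr}$ of polynomial size and an explicit constant $\mathtt{succ}\colon\mathtt{Addr}\to\mathtt{Addr}$ now realize a tape $\mathtt{Addr}\to\mathtt{Cell}$ and the head step $h=\mathtt{succ}[h]$, everything of polynomial description size and type length at most~$2$.

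\emph{Main obstacle.} I expect the hardness reduction, not membership, to be the crux: one must exhibit a genuinely polynomial-size RG description that faithfully simulates a space-bounded machine using only RG's minimal primitives---no built-in arithmetic, arrays, or loops---which forces the binary-tree encoding of the tape, the split-register encoding of the head, and carefully wired carry and branching gadgets, and then prove that the chosen target property holds exactly when $M$ accepts, doing this uniformly for all five properness conditions and without assuming the constructed description is itself proper. On the membership side the only delicate points are bookkeeping the nested reachability-check searches so that their cost remains a polynomial space factor, and pinning down the exact graph-theoretic reformulation of ``finitely many plays''.
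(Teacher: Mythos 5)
Your proposal is correct and follows essentially the same route as the paper's proof: membership by bounding game-state size via the type length ($2^{O(n\log n)}$ in general, polynomial when the type length is fixed), a nondeterministic search with a step counter plus Savitch's theorem, and hardness by simulating a space-bounded Turing machine whose exponential tape is packed into a depth-$n$ nested Boolean map with binary head addressing and ripple-carry head moves, with the five properness conditions handled by small gadgets reducing to or from the legal-move problem. The only differences are cosmetic (input $x$ versus the empty input, $n$ separate head bits versus one \RG!Indices->Bool! map, and type length $2$ versus $1$ in the fixed-type-length hardness case).
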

        \begin{proof}[Proof idea]
        The hardness proofs reduce from a canonical problem with a Turing machine with exponential/polynomial tape.
        The membership proofs use nondeterminism and a logarithmic counter for counting (doubly-)exponentially many game states. Full proof in the Appendix.
        \end{proof}

\section{Language Ecosystem and Tooling}
    \subsection{High-level Regular Games}
        While very simple, RG is also verbose -- especially for large maps and the list of edges.
        To alleviate that, High-level Regular Games (HRG) takes a different approach -- it defines the automaton using syntax based on popular (structural) programming languages.

        \begin{lstlisting}[style=HRG, gobble=12]
            // Excerpt from Tic Tac Toe.
            domain Player = x | o
            domain Piece = empty | Player
            domain Position = P(I, J)
                where I in 0..2, J in 0..2

            board : Position -> Piece = { P(I, J) = empty
                where I in 0..2, J in 0..2 }
            next_vertical : Position -> Position
            next_vertical(P(I, J)) = P((I + 1) % 3, J)

            reusable graph anyEmpty() {
              forall position: Position {
                check(board[position] == empty) } }

            graph turn(me: Player) {
              player = me
              // ...
              if not(reachable(anyEmpty())) { end() } }

            graph rules() { loop { turn(x) turn(o) } }
        \end{lstlisting}

        Non-automaton definitions are also more natural -- numeric ranges and set unions help with repetition, and pattern matching improves readability while simplifying the exhaustiveness check (i.e., whether all cases are covered).

    \subsection{Compatibility with Other Languages}
        On top of HRG, one can define languages and generators dedicated to particular game classes.
        As an example, we developed \emph{LineGames} generator, where we defined 23 unique existing games in a concise way.
        It is implemented as a Python library, taking advantage of a well-known syntax and compatibility; yet, it would also be trivial to develop a dedicated textual format.
        Games defined in this way apply the same optimization pipeline as those written directly in HRG, hence do not come with any performance penalty.
        
        Another developed translation is RBG to RG.
        It follows Thompson's construction, transforming the regular expression to an automaton.
        The RBG's specific concepts (board, arithmetic, position, piece counters) are embedded via general language constructs.
        Semantic differences (e.g., in RBG, the game ends when a player has no legal move) are handled in a post-processing step.
        The resulting efficiency is comparable and sometimes beats the RBG system itself.
        
        There is also a currently experimental translation from GDL to RG, based on the propositional network's approach.

    \subsection{Transformations and Validators}\label{subsec:transformations}
        Games automatically translated to RG are usually not in their most performant form. 
        We developed a number of transformations that discover certain patterns and simplify them, while preserving the correctness of the game.

        Some of these optimizations are specific for RG (e.g., pruning redundant tags), while others are also used for general-purpose programming languages (e.g., inlining assignments or constant propagation).
        As applying one transformation can enable others, they are run in a fixed-point loop, in an interaction-based order.
        While most transformations are local to a fragment of the automaton, some require a global view of the game.
        To achieve this, we employ data-flow analysis~\cite{Kildall1973Optimization}, which calculates information about the state of the game (knowledge) at each node. 
        
        To ensure game description correctness at all times, we run a series of validators after every applied transformation.
        This includes type checking, reachability possibleness (i.e., whether the automaton remains connected), and map correctness (i.e., all maps have unique keys and a default value).

        Optimized automata translated from RBG have over 72\% fewer nodes, 66\% fewer edges, and even 21\% smaller \emph{state size} -- memory needed to store the game state, which is a key factor for the reasoner's efficiency.
        For games written in HRG, optimizations reduce the number of nodes and edges respectively by 47\% and 41\%.

        The complete translation and optimization suite for most HRG games runs under 100ms, resulting in an almost immediate feedback loop in the IDE.

    \subsection{Programming Interface of Forward Model}
        The compiler takes an Abstract Syntax Tree (AST) of a game description in RG and generates C++ code providing a forward model for the game.
        C++ was chosen as a target language because it is one of the most effective and commonly used languages, and due to its interoperability with other languages.
        The generated module provides functions for traversing the game tree: checking if a game state is terminal, computing all legal moves, applying a move, etc. 
        This enables writing AI agents and testing them generically on all available games.
        
        Generating legal moves is based on automaton traversal, by applying legal actions and collecting all legal moves.
        Applying a move works analogously: the game state transition is performed through a legal walk corresponding to the given move.
        The compiler applies the preceding analysis (through pragmas) to produce the most efficient model for the game.
        
        The following is an example part of the main interface:
        \lstset{language=C++,
        keywordstyle=\color{blue}\ttfamily,
        commentstyle=\color{magenta}\ttfamily,
        morecomment=[l][\color{magenta}]{\#},
        }
        \begin{lstlisting}[language=C++, gobble=12]
            class GameState {
             ...
             // Is the current node `end`
             bool isTerminal() const;
             // Gets the player on move
             PlayerOrSystem getCurrentPlayer()const;
             // Fills the vector with all legal moves
             void getAllMoves(
               std::vector<Move>& moves, Cache& cache);
             // Applies the given legal move
             void apply(const Move& move, Cache& cache);
             // Returns a human-readable representation
             std::string getStateDescription()const;
            }
        \end{lstlisting}

    \subsection{IDE}
        \begin{figure}
            \centering
            \includegraphics[width=\linewidth]{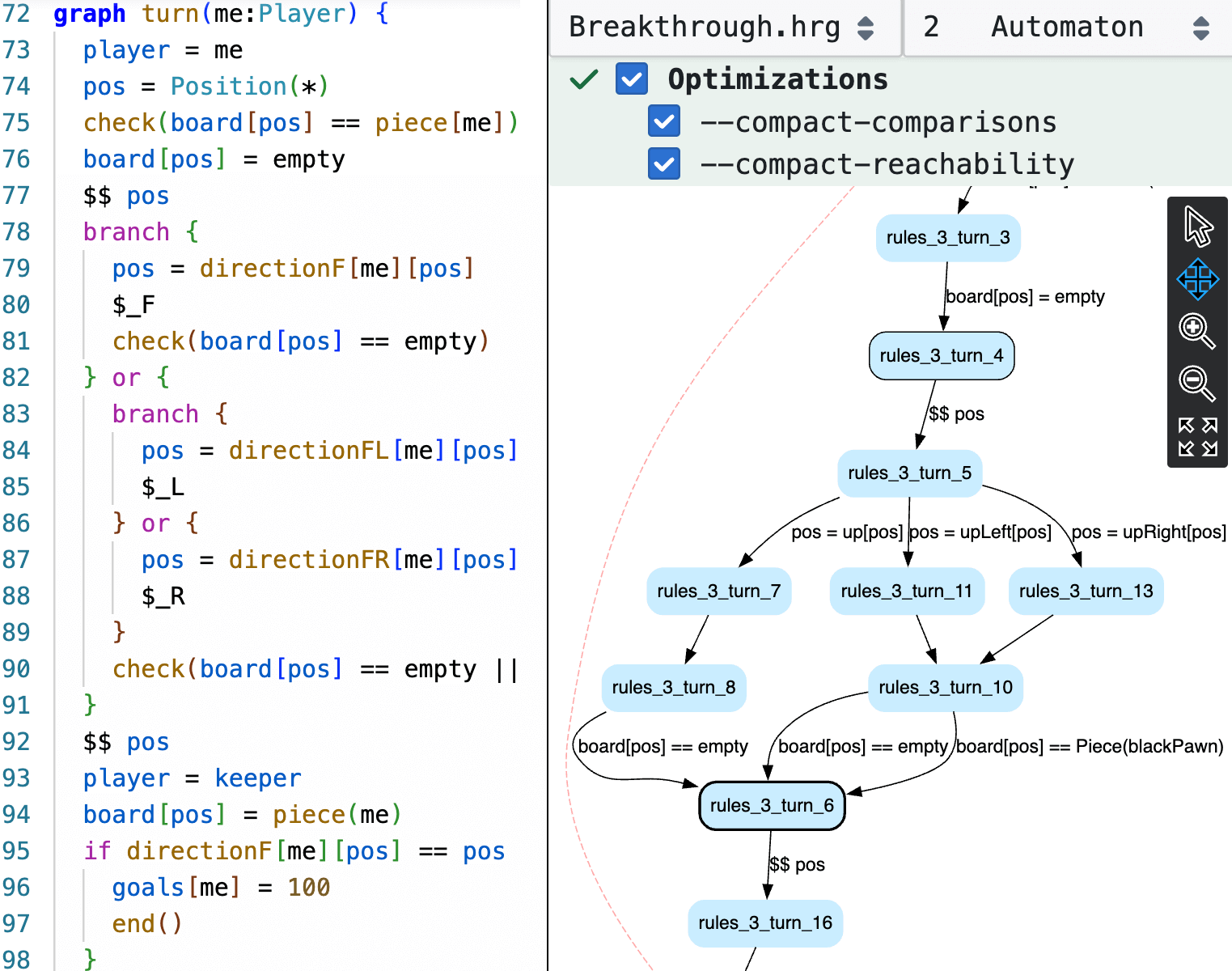}
            \caption{
                The IDE is split into two parts.
                The left side contains the code editor, while the right side includes benchmarking tools, automaton visualization, transformation configuration, and a snapshot of the game description after every transformation.
                It is available online at Anonymized.}
            \label{fig:ide}
        \end{figure}
    
        RG and Ludii are the only GGP languages with a proper, industry-grade Integrated Development Environment (IDE).
        The main part of RG's IDE, presented in Fig.~\ref{fig:ide}, is the code editor with support for Language-Server Protocol (LSP).

        Thanks to LSP, the code editor provides game developers with features they know from the code editor they use daily, such as navigation, syntax highlighting, auto-completion, and diagnostics.
        The IDE provides LSP features for HRG and RG, while basic editor functionalities are also available for both GDL and RBG.

\section{Efficiency Experiments}
    Table~\ref{tab:experiments} shows a comparison of the efficiency of game descriptions written in HRG (handmade or generated), automatically translated RBG to RG, and native RBG and Ludii using their own tooling.
    Due to the diversity in existing game variants, we attempted to align the rules with those of other systems.
    Hence, some games have two variants (RBG and Ludii), which differ mainly by turn limits.
    
    All games implemented in HRG yield a faster forward modal than both RBG and Ludii.
    The advantage over RBG comes from a more general and expressive language (allowing an implementation without workarounds, e.g., rotations in Pentago) and more advanced analysis (e.g., simplifying expressions, inlining assignments, detecting disjoint paths).
    
    \begin{table}[htb!]\centering\small
        \newcommand{\ccol}[1]{\multicolumn{1}{c}{#1}}
        \newcommand{\LG}{}
        \caption{Flat Monte Carlo playouts per second on a range of games that are also available in other GGP systems (RBG and Ludii).
        Results are averaged over three 1-minute runs.
        }\label{tab:experiments}
        \begin{tabular}{l@{\hskip-6pt}rrrr}
            \toprule
                                     & \ccol{\textbf{HRG}} & \ccol{\textbf{RBG}} & & \\
                                     & \ccol{$\downarrow$} & \ccol{$\downarrow$} & & \\
            \textbf{Game}            & \ccol{\textbf{RG}}  & \ccol{\textbf{RG}}  & \ccol{\textbf{RBG}} & \ccol{\textbf{Ludii}} \\\midrule
            Alquerque (lud)          &\LG 24 871           & 16 510              & 15 962              & 1 981 \\
            Alquerque (rbg)          &\LG 116 965          & 79 899              & 79 312              & --    \\
            Amazons                  & 6 226               & 3 582               & 3 693               & --    \\
            Amazons (split2)         & 35 222              & 24 116              & 30 365              & 3 199 \\
            Ataxx                    & 11 020              & --                  & --                  & 555   \\
            Backgammon               & 2 323               & --                  & --                  & 7${}^\dagger$ \\
            Bombardment              & 416 266             & --                  & --                  & 14 907 \\
            Breakthrough             & 82 135              & 79 175              & 50 977              & 3 365  \\
            Chess                    & 1 572               & 531                 & 995                 & 113${}^\dagger$ \\
            Chess (king capture)     & 13 170              & 4 887               & 11 062              & --     \\
            Clobber                  & 40 012              & --                  & --                  & 1 664  \\
            Connect Four             & 1 297 176           & 122 716             & 914 514             & 55 858 \\
            Dash Guti (lud)          &\LG 38 455           & --                  & --                  & 2 885  \\
            Dash Guti (rbg)          &\LG 95 419           & 64 334              & 74 039              & --     \\
            Dots and Boxes           & 36 362              & --                  & --                  & 1 993  \\
            English Draughts         & 69 244              & 45 927              & 62 251              & 4 104${}^\dagger$ \\
            Fox and Geese (lud)      & 20 153              & --                  & --                  & 1 457  \\
            Fox and Hounds           & 444 243             & 323 068             & 331 884             & 14 216 \\
            Gol Ekuish (lud)         &\LG 11 490           & --                  & --                  & 770    \\
            Gol Skuish (rbg)         &\LG 54 063           & 41 004              & 34 239              & --     \\
            Gomoku (standard)        & 26 126              & 23 862              & 22 458              & 19 603 \\
            Hex (11x11)              & 23 535              & 2 447               & 22 441              & 11 024 \\
            Knightthrough            & 143 966             & 81 870              & 86 355              & 2 191  \\
            Lau Kata Kati (lud)      &\LG 30 224           & --                  & --                  & 3 676  \\
            Lau Kata Kati (rbg)      &\LG 106 508          & 74 495              & 85 160              & --     \\
            Oware                    & 27 991              & --                  & --                  & 347    \\
            Pentago                  & 43 875              & 500                 & 22 553              & --     \\
            Pentago (split) (lud)    & 172 626             & 6 874               & 61 878              & 3 933  \\
            Pretwa (lud)             &\LG 40 084           & --                  & --                  & 5 401  \\
            Pretwa (rbg)             &\LG 273 431          & 176 254             & 167 237             & --     \\
            Reversi                  & 28 445              & 2 249               & 19 838              & 1 497  \\
            Surakarta                & 6 589               & 3 095               & 5 885               & 843    \\
            The Mill Game            & 61 514              & 31 403              & 43 116              & --     \\
            The Mill Game (lud)      & 35 674              & 17 143              & 24 563              & 2 667${}^\dagger$ \\
            Tic-Tac-Die              & 2 708 648           & --                  & --                  & 36 702 \\
            Ult. Tic-Tac-Toe         & 241 088             & --                  & --                  & 8 090  \\
            Yavalath                 & 415 251             & 359 832             & 352 910             & 93 642 \\
            \bottomrule
        \end{tabular}%
        \vspace{.2cm}
        {\raggedright\emph{Notes:} Some games exist in many variants concerning e.g., split moves, (non)mandatory captures, different turn limits.\\
        \quad(rbg) -- defined to comply exactly with the RBG variant.\\
        \quad(lud) -- defined to comply with the default Ludii variant.\\
        \quad$\dagger$ -- The Ludii variant contains known subtle differences, but we consider them minor enough or in favor with respect to efficiency (e.g., Backgammon contains a bug of sometimes preliminary ending a move; Chess executes choice of a promoted piece in a separate move; The Mill Game contains a bug that 3 pieces left cannot form a capturing mill).\\
        \emph{Environment:} AMD~Ryzen~9~3950X, 64GB, Ubuntu~24.04.3~LTS, g++~14.2.0, GraalVM~25.0.1+8.1.\\}
    \end{table}
    
\section{Conclusions}
    We have presented \emph{Regular Games}, a new GGP formalism that is as universal as GDL-II and Ludii while being simpler and much more computationally efficient.
    It beats RBG for all implemented games, the fastest GGP language so far, and has the potential to supersede it.
    
    The Regular Games system comes with a convenient programming interface that allows writing AI agents and testing them on the available games.
    Reliable research requires rigorous evaluation of algorithms across a diverse set of games.
    Many enhancements of algorithms like MCTS or Minimax do not work in all cases, so the subdomain where they are effective needs to be found and properly described.
    The efficiency becomes even more crucial for experiments with Reinforcement Learning approaches.
    
    Another novelty is the multilevel translation approach and the extensibility of RG by design.
    Usually, GGP languages are incompatible with any other, relying only on their own separate database of games.
    They are extensible in a limited way, by adding new mechanisms/keywords to the language, thus increasing the overall complication level and affecting the underlying engine.
    We have shown that our language can serve as an assembler for GGP, enabling translation of other description languages to RG instead of relying on their own executables.
    Developing new specialized languages does not require any change in the bottom pipeline.
    This makes the system excellent for procedural content generation applications and exploration of new game rules.
    
    More tools supporting the agent programmers for RG are still in development, but the system can already be used as a research platform.
    RG provides a C++ library that can be included and, for a given game, directly call all the functions of the forward model, giving the programmer full control of how to manage the communication with the game engine.
    
    Future work involves the development of other higher-level languages for, e.g., fairy chess, card, and dice games.
    Independently, the efficiency can be improved by extending the automaton analysis and applying more techniques (e.g., bit-boarding).
    The experimental GDL translation could also be improved, e.g., by detecting alternating moves.

\section{Acknowledgements}

This work was supported by the National Science Centre, Poland, under project number 2021/41/B/ST6/03691.

\bibliography{bibliography}

\appendix

\onecolumn
\pagestyle{plain}

\newpage
\section{Theoretical Expressiveness}

\begin{theorem}
    Regular Games is universal for the class of all finite turn-based games, including perfect or imperfect information, and with randomess where probabilities are rational numbers.
\end{theorem}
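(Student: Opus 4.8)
The plan is to give a reduction from the \emph{extensive form} of an arbitrary finite turn-based game to a proper RG description whose game tree is isomorphic to the extensive-form tree, with matching players on the move, chance probabilities, observations, and outcomes. So fix the game as a finite rooted tree $T$: every internal node is labelled by the player to move (possibly the chance player, \emph{Nature}), its outgoing edges are the available actions, each Nature node carries a probability distribution over its children with rational values, the decision nodes of each player are partitioned into information sets (assuming, as usual, that the action set is constant along an information set and that players have perfect recall), and each leaf carries an outcome value for every player. Finiteness of $T$ gives finitely many players and finitely many outcome values, and one may fix a single integer $N$ such that every chance probability equals $a/N$ for an integer $a$, the relevant $a$'s summing to $N$ at each Nature node.

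For the construction I would let the rules automaton have one node $q_v$ per node $v$ of $T$ plus one \RG!keeper!-owned \emph{dispatch} node $d_v$ per $v$, and use only the built-in variables \RG!player!, \RG!goals!, \RG!visible!, with \RG!type Player! the players and \RG!type Score! the finite set of outcome values. The node \RG!begin! serves as $d_r$ for the root $r$. At a dispatch node $d_v$ the keeper has a single move: if $v$ is a leaf, set \RG!goals! to the outcome vector at $v$, set \RG!player = keeper!, and go to \RG!end!; otherwise set \RG!visible! according to which players observe the move about to be made at $v$, set \RG!player! to the player labelling $v$ (or to \RG!random! if $v$ is a Nature node), and go to $q_v$. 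From a decision node $q_v$ of player $p$ we add, for each outgoing action $a$ with child $v'$, a move walk that emits a tag carrying the label $a$, then assigns \RG!player = keeper!, landing in $d_{v'}$; hence the legal moves of $p$ at $(\mathcal{S},q_v)$ are exactly the action labels out of $v$. Properness conditions (3)--(5) are then immediate: the automaton is acyclic (it is essentially $T$), so there are finitely many plays; every internal node has an action, so play is continuable; and \RG!begin! and every $d_v$ give the keeper exactly one move.

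The delicate ingredient is Nature together with the information structure. For a Nature node $v$ with children $v_1,\dots,v_k$ and probabilities $a_1/N,\dots,a_k/N$ I would add $N$ parallel move walks from $q_v$: walk number $j$ first emits a \emph{hidden} bookkeeping tag $j$ (under \RG!visible! temporarily set to $0$ for all players), then, if the event is meant to be public, emits the outcome tag of the child assigned to $j$, and finally assigns \RG!player = keeper! and lands in the dispatch node of that child, where indices $1,\dots,a_1$ are assigned to $v_1$, the next $a_2$ to $v_2$, and so on. The hidden tag makes all $N$ walks carry pairwise distinct labellings, so \RG!random! picks uniformly among exactly these $N$ moves and reaches $v_i$ with probability $a_i/N$; meanwhile the obfuscated move seen by a player is empty when the event is hidden from them and is the bare outcome otherwise, never the index $j$. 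The same visibility discipline is used for ordinary moves: the \RG!visible! assignments at the dispatch nodes must be calibrated so that, along every play, the sequence of obfuscated moves delivered to player $p$ faithfully encodes the information set of $p$'s current decision node --- which is exactly where perfect recall is needed, since it is what makes the refinement ``by what $p$ has observed so far'' consistent with the information partition. Establishing this correspondence of information sets is the main obstacle of the proof; the remaining obligations --- that all types are singleton or small set types so every action is valid (condition (1)), and that from any reachable state the automaton node determines the $T$-node and hence each move label determines a unique legal walk (condition (2)) --- are routine. Granting properness, $v \mapsto q_v$ is an isomorphism of game trees preserving the player on the move, the chance probabilities, the per-player observations, and the outcomes recorded in \RG!goals!, which is precisely the asserted universality.
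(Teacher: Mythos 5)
Your overall strategy coincides with the paper's: reduce from the finite extensive form, mirror the game tree node-for-node in the rules automaton, let the \texttt{keeper} mediate between consecutive decision points, realize rational chance probabilities at Nature nodes by duplicating edges with distinct hidden tags so that \texttt{random}'s uniform choice induces the right distribution, and set \texttt{goals} on the paths into \texttt{end}. The extra dispatch nodes $d_v$ versus the paper's inline keeper segments are an immaterial structural variation, and your verification of properness conditions (1)--(5) is fine.

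However, there is a genuine gap exactly where you flag ``the main obstacle'': you never supply the mechanism that makes the per-player observations match the information partition, and your proposal to tag moves by \emph{action labels} plus a public ``outcome tag,'' with visibility ``calibrated'' somehow, does not obviously work for an arbitrary information partition even under perfect recall --- which observations to publish to whom is precisely the content of the claim. The paper closes this with a concrete device: for each player $P_r$ and each state $s'$, it introduces a tag $t_{r,s'}$ that is \emph{unique to the information set of $P_r$ containing $s'$} (not to the action or to $s'$ itself). The mover chooses its own tag $t_{\iota(s),s'}$ while only it is visible, which both determines its successor information set and makes moves from the same information set indistinguishable in the right way; then the keeper, in a separate single-move segment, reveals to each player $P_r$ in turn (by toggling \texttt{visible[$P_r$]}) exactly the tag $t_{r,s'}$ of $P_r$'s information set containing the actual successor. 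Consequently each player's obfuscated history is, by construction, the sequence of its own information sets along the play, so two play prefixes are indistinguishable to $P_r$ if and only if the extensive-form definition says they should be --- no separate perfect-recall argument is needed. You should replace your ``calibration'' step with this information-set tagging scheme (or prove an equivalent lemma); as written, the central correspondence is asserted rather than established.
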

\begin{proof}        
Following \cite{Rasmusen1994Games}, we recall finite extensive-form games.
\begin{definition}
A finite extensive-form game is a structure $\mathcal{G} = (n, \mathcal{T}, \iota, \rho_s, u, \mathcal{I})$, where:
\begin{itemize}
\item $n$ is the number of players, which we name $P_1,\ldots,P_n$.
\item $\mathcal{T}$ is a finite tree with a set of \emph{states} $S$, unique initial state $s_0\in S$, set of terminal states $T\subseteq S$, and a predecessor function $p: (S\setminus\{s_0\})\rightarrow S$;
\item $\iota: (S\setminus T)\rightarrow \{\text{Nature},P_1,\ldots,P_n\}$ is a function indicating which state belongs to which player ($\text{Nature}$ player perfor ms random moves).
\item $\rho_s$ is a probability measure, defined for all states $s$ for which $\iota(s)=\text{Nature}$; 
\item $\mathcal{I}=\{\mathcal{I}_r | r \in \{ 1, \ldots, n \}\}$ is an information partition such that for each $I\in\mathcal{I}_r$ (information set):
\begin{itemize}
\item the elements in an $I$ are pairwise disjoint and sum up to $S$;
\item all children of a state $s$ such that $\iota(s)=r$ are in different information sets;
\item for all branches $B$ there is exactly one $s\in B$ such that $s$ occurs in some information set in $I$;
\item for all $i\in I$, the predecessors of all states in $i$ are in one $i'\in I$.
\end{itemize}
\end{itemize}  
\end{definition}

We construct a Regular Games game description so that the game tree is directly represented by the rules automaton.
We do not need any variables except the implicit ones: \RG{player}, \RG{visible}, and \RG{goals}.
Initially, \RG{visible} is set to $0$ for all players.

For each game state $s$ of $\mathcal{G}$, we add one node to the automaton, say $q_s$.
Furthermore, from \RG{begin} correspond to the initial state $s_0$ of $\mathcal{G}$, so \RG{begin}$=q_{s_0}$.
If $s$ is a terminal state, then we add a path from $q_s$ to \RG{end}, where \RG{goals} are set for each player according to the the utility function $u$ of $\mathcal{G}$; these edges will be executed by the keeper, so all incoming edges to $q_s$ have the assignment \RG{player=keeper}.

To handle imperfect information, we use tags to reveal partial information about the current state for each player separately.
Let $t_{r,s'}$ be unique for the information set of $P_r$ containing $s'$.
If $\iota(s) \neq \text{Nature}$, then for each child $s'$ of $s$, we add the following path from $q_s$ to $q_{s'}$:\\
\begin{enumerate}
\item[(1)] \RG{visible[P_$\iota(s)$]=1;}\\\$ $t_{\iota(s),s'}$;\\\RG{visible[P_$\iota(s)$]=0;}
\item[(2)] \RG{player = keeper}
\item[(3)] \RG{visible[P_1]=1;}\\\$ $t_{1,s'}$;\\\RG{visible[P_1]=0;}
\item[] \ldots
\item[] \RG{visible[P_n]=1;}\\\$ $t_{n,s'}$;\\\RG{visible[P_n]=0;}
\item[(4)] \RG{player=P_$\iota(s')$}\quad if $\iota(s') \neq \text{Nature}$, and\\\RG{player=random}\quad if $\iota(s') = \text{Nature}.$
\end{enumerate}

\bigskip

Therefore, the player $P_r$ in its move chooses $t_{\iota(s),s'}$, which determines its next information set.
The other tags are revealed by the keeper to the other players (including $P_r$, though it is not necessary), depending on the actual next game state $s'$.

To handle randomness, for a state $s$ with $\iota(s) = \text{Nature}$, we add paths from $q_s$ to the children $q_{s'}$ as above, but with the following modification:
Instead of (1), we add multiple edges, each with a distinct tag, to obtain the required probability distribution on the moves.
By duplicating edges, we can obtain any rational probability distribution.
These tags are not seen by the players.
\end{proof}

The \emph{type length} is the number of arrows plus one in the type definition.
Set types have type length $1$.

\begin{theorem}
Given a game description in RG, the problem of deciding whether there is a legal move initial game state is EXPSPACE-complete.
The same holds for verifying conditions (1)--(5) of a proper game description.
If the maximum type length is fixed, then these proble ms become PSPACE-complete.
\end{theorem}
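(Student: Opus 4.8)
The plan is to prove matching upper and lower bounds in both regimes, handling the ``legal move from the initial state'' problem and the properness-verification problems uniformly, since they all reduce to reachability (or its complement) in the automaton's \emph{configuration graph}, whose vertices are the game states $(\mathcal S,q)$.

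\emph{Upper bounds.} Let $N$ be the description length. A value of a type of type length $k$ over set types of size at most $N$ is a tree of depth $k$ with at most $N^{k-1}$ leaves, hence of size $N^{O(k)}$; with at most $N$ variables, a game semistate has size $N^{O(k)}$, which is $2^{O(N\log N)}$ when $k$ is unbounded and $N^{O(1)}$ when $k$ is fixed. The configuration graph thus has at most doubly-exponentially (resp.\ exponentially) many vertices, each of exponential (resp.\ polynomial) size. Deciding whether there is a legal move from the initial state amounts to nondeterministically guessing a move walk from $(\mathcal S_{\mathrm I},\texttt{begin})$ step by step until an assignment to \texttt{player} is reached, while keeping only the current configuration and a step counter that aborts once more configurations have been visited than exist (the counter needs only polynomially many bits). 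Evaluating the legality of an action is immediate except for a reachability check, which spawns a nested reachability query of the same shape; since recursion is forbidden, the nesting depth is at most the number of automaton nodes, so the whole procedure runs in $\mathrm{NSPACE}(2^{O(N\log N)})\subseteq\mathrm{EXPSPACE}$ in general and in $\mathrm{NPSPACE}=\mathrm{PSPACE}$ for fixed type length. Each of conditions (1)--(5) is of the form ``for every reachable game state some local or reachability property holds'', so its negation is again a reachability query, possibly with one nested reachability or co-reachability subquery (e.g.\ ``a reachable state from which no legal move exists'' for (3), or ``a reachable game state lying on a cycle'' for (5)); this places the verification problems in the same classes, with the additional remark that for (1) a nested evaluation exceeding the node-count depth bound certifies a recursive, hence improper, description.

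\emph{Lower bounds.} For EXPSPACE-hardness I would reduce from acceptance of an input $x$ by a Turing machine $M$ using $2^{|x|}$ tape cells. The tape is stored in a single variable \texttt{tape} of type $\texttt{Addr}\to\cdots\to\texttt{Addr}\to\texttt{Cell}$ with $|x|$ arrows (type length $|x|+1$), where $\texttt{Addr}$ is the two-symbol set type, so its values range over maps $\{0,1\}^{|x|}\to\texttt{Cell}$; the head position is kept in $|x|$ Boolean variables $h_0,\dots,h_{|x|-1}$, and the control state in another small variable. One step of $M$ is a polynomial-size deterministic automaton gadget: read the scanned cell via the access expression $\texttt{tape}[h_0]\cdots[h_{|x|-1}]$ and branch on its value using comparisons; write the new symbol with an assignment to the same access; move the head by a ripple-carry increment/decrement over $h_0,\dots,h_{|x|-1}$. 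The initial semistate encodes $x$, the gadgets are chained so that exactly one legal walk is ever possible, the automaton routes to an assignment to \texttt{player} precisely when $M$ accepts and to a dead-end node otherwise, and the whole description is of polynomial size; hence a legal move exists from the initial state iff $M$ accepts $x$, giving EXPSPACE-hardness. For the properness problems I reduce from \emph{non-acceptance} (EXPSPACE is closed under complement): arrange that $M$ rejecting yields a proper game, $M$ accepting breaks (3) via a reachable dead end, and $M$ looping breaks (5) via a cycle of game states. For fixed type length the same construction with a $\mathrm{poly}(|x|)$-space machine applies: the tape becomes a variable of type $\{1,\dots,p(|x|)\}\to\texttt{Cell}$ (type length $2$), the head a variable over $\{1,\dots,p(|x|)\}$, and the head move uses a successor constant, yielding PSPACE-hardness.

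\emph{Main obstacle.} The gadgets, the padding to the exact space bound, and the counter bookkeeping are routine. The delicate point is the treatment of reachability-check actions: one must argue cleanly that the non-recursion requirement stratifies the nested queries so that the stack of simultaneously live queries fits within the target space bound, and that when verifying (1)--(5) --- where non-recursion may not be assumed --- a correct algorithm both detects the recursive (hence improper) descriptions and, on the non-recursive ones, evaluates the checks within the depth bound. Making this stratification argument and the ``exactly one legal walk'' determinism of the reductions fully watertight is the crux; the remainder follows the standard pattern for succinctly represented reachability problems.
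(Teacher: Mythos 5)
Your proposal is correct and follows essentially the same route as the paper: the same upper-bound argument via the succinctly represented configuration graph (semistates of size $2^{O(N\log N)}$ in general, polynomial for fixed type length), a nondeterministic guess-and-count traversal with Savitch's theorem, the same lower bound encoding an exponential Turing-machine tape in a nested arrow type over a two-symbol set with a binary head position and ripple-carry head moves, and the same strategy of reducing the properness conditions to (the complement of) the legal-move problem. One small slip: in the unbounded-type-length regime the cycle-detection counter must count up to a doubly exponential number of configurations and hence needs exponentially many bits, not polynomially many --- which still fits in EXPSPACE, so the argument stands.
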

\begin{proof}
\noindent$\bullet$ \emph{Legal move solution in EXPSPACE}:

Let $n$ be the length of the description (the number of characters).

Suppose that a variable has a type length $k$:
\[ T_1 \to T_2 \ldots \to T_k, \]
where $T_i$ are set types.
Then the number of symbols that can be stored by this variable equals $|T_1| \cdot \cdots \cdot |T_{k-1}|$ (i.e., one symbol from $T_k$ for each combination of symbols from types $T_1,\ldots,T_{k-1}$).
We need to multiply this by $\log n$, which is the space necessary to store one symbol, but this cost will be negligible.

Now, observe that $n$ is a (certainly much inflated) upper bound on the number of symbols and also on the maximum type length.
The size of a game state is at most $\mathcal{O}(n^n) = \mathcal{O}(2^{n \log n})$.
Hence, we can store up to an exponential number of game states in EXPSPACE.
The number of possible game states is bounded by $\O(n^{n^n})$, hence we can also implement a logarithmic counter in EXPSPACE.
By Savitch's theorem, NEXPSPACE = EXPSPACE, so we can also use nondeterminism.

The algorithm solving the problem is traversing the game tree starting from the initial game state up to the end of a legal move, by nondeterministically guessing at most $\O(n^{n^n})$ transitions (actions), just counting if we have not fallen into a cycle on game states.
It should be noted that we count only simple transitions, and when computing patterns \RG{? a->b} or \RG{! a->b}, we also traverse and count transitions involved by their computation.

\noindent$\bullet$ \emph{Legal move solution in PSPACE when type length is fixed}:

Let $k$ be the maximum type length.
Then the size of a game state is at most $\mathcal{O}(k^n)$.
Hence, we can store a polynomial number of states in PSPACE and a logarithmic counter counting up to $2^{\mathcal{O}(k^n)}$, which is the maximum number of possible game states.

The algorithm solving the problem is the same and works in NPSPACE = PSPACE.

\noindent$\bullet$ \emph{Legal move EXPSPACE-hardness}:

We reduce from a canonical EXPSPACE-complete problem: whether a given Turning machine of size $N$ (the number of states and also the same number of rules, for simplicity) with a tape of length at most $2^N$ accepts the empty input.
The machine can be either deterministic or nondeterministic (it does not complicate the reduction).
We can assume that the tape cells contain binary values ($0$, $1$), it is initially filled with $0$s, the tape cells are indexed by integers from $0$ to $2^N-1$, and the machine starts at the cell $0$.
Let the states of the machine be denoted by $q_0,\ldots,q_{N-1}$, where $q_0$ is the initial state and $q_{N-1}$ is the final (accepting) state.
A rule of the machine is a quintuple $(q_i,a,q_j,b,D)$, which means that being at the state $q_i$ and reading $a \in \{0,1\}$ from the tape at the current position, the machine can switch to state $q_j$, write $b \in \{0,1\}$ at the current position, and go into the adjacent cell in the direction $D \in \{\mathrm{Left},\mathrm{Right}\}$.

We construct a game description in RG such that the initial player has a legal move if and only if the Turning machine can reach state $q_{N-1}$.

The content of the tape will be stored in game states by a special variable \RG!tape!.
We define the type $\mathit{Bool} = \{0,1\}$ and the variable
\RG!tape: Bool -> ... -> Bool!
where \RG{Bool} occurs $N+1$ times.
Then every cell on the tape can be indexed by the binary representation of its index.
We assume the last digit is the least significant, so we obtain an adjacent cell on the tape by adding or subtracting $1$.

We also define the type \RG!Indices = {0,1,...,N-1}! and the variable
\RG!position: Indices -> Bool!.
which will store the current position on the tape.
We can use \RG!Position! directly to index \RG!tape! by the following expression:\\
\RG!tape [N-1] [Position[N-2]] ... [Position[0]]!

Finally, we need to store the state of the machine.
As the indices of states are from $0$ to $N-1$, we can reuse type \RG!Indices! and add the dedicated variable \RG!state: Indices!.

Now, we implement the rules of the Turning machine as an automaton in RG.

There is just one player \RG!machine!, initially set to the variable \RG!Player!.
Game states store the above defined variables \RG!tape!, \RG!position!, \RG!state!, and an auxiliary variable \RG!index: Indices!.

The following is an example code in HRG for $N=4$ implementing the above concepts (which translates polynomially to RG):
\begin{lstlisting}[style=HRG]
domain Player = machine
domain Score = 0 // Irrelevant

domain Bool = 0 | 1
domain Indices = I(X) where X in 0..3
domain IndicesOrNan = Indices | nan

player: PlayerOrSystem = machine

tape: Bool -> Bool -> Bool -> Bool -> Bool = {:{:{:{:0}}}}
position: Indices -> Bool = {:0}
index: Indices = I(0)
state: Indices = I(0)
\end{lstlisting}

We implement going to the adjacent cell on the right by decrementing \RG!Indices! in the binary system.
This uses just a loop and the auxiliary constant map \RG!inc! for incrementing \RG!index!.
Analogously, we implement going to the adjacent cell on the left by incrementing \RG!Indices!.

The following is an HRG fragment implementing these two gadgets for $N=4$:

\begin{lstlisting}[style=HRG]
inc: Indices -> IndicesOrNan
inc(I(X)) = if X < 3 then I(X+1) else nan

graph goRight() {
   index = I(0)
   loop {
     branch {
       check(position[index] == 1)
       position[index] = 0
       check(index != I(3)) // We are on the last cell, so goRight is illegal
       index = inc[index]
     } or {
       check(position[index] == 0)
       position[index] = 1
       return()
     }
  }
}

graph goLeft() {
  index = I(0)
  loop {
    branch {
      check(position[index] == 0)
      position[index] = 1
      check(index != I(3)) // We are on the first cell, so goLeft is illegal
      index = inc[index]
    } or {
      check(position[index] == 1)
      position[index] = 0
      return()
    }
  }
}
\end{lstlisting}

It remains to define the core rules.
There is a central node from which there are $N$ outgoing edges, one for each rule of the machine.

For a rule $(q_i,a,q_j,b,D)$, the branch first checks the state \RG!state == i!, then checks the symbol on the tape at the current position \RG!tape [Position[N-1]] [Position[N-2]] ... [Position[0]] == a!.
Then, it sets the state of the machine $\mathit{state} = j$, modifies the symbol on the tape: \RG!tape [Position[N-1]] [Position[N-2]] ... [Position[0]] = b!, and goes in either left or right by one of the gadgets above.
Applying a rule is finalized by a tag, which ensures that each move has an unique application.
After that, it returns to the central node, unless the next state is $q_{N-1}$, in which case it ends the move by \RG!player = keeper! and goes to \RG!end!.

The following is a scheme of this construction, where $\ldots$ denotes places that depend on $N$ or the rules.
\begin{lstlisting}[style=HRG]
graph rules() {
  loop {
    branch {// Rule 1
      check(state == I(...))
      check(tape[position[I(N-1)]]...[position[I(0)]] == ...)
      state = I(...)
      tape[position[I(N-1)]]...[position[I(0)]] = ...
      ... // goLeft() or goRight()
      DOLLAR R1
    } or {
      ...
    } or {// Accepting
      ...
      state = I(N)
      ...
      DOLLAR RN
      player = keeper
      end()
    }
  }
}
\end{lstlisting}

In the code repository, \texttt{turingMachine.hrg} implements a simple Turing machine for $N=4$ following the construction from the proof.

\noindent$\bullet$ \emph{Legal move PSPACE-hardness when type length is 1 (no arrows)}:\\

Instead of one variable for the tape, we create $N$ variables \RG!tape0!, $\ldots$, \RG!tape$N$-1! of type \RG!Bool!.
For reading and writing to the current position indicated by \RG!position: Indices!, we create a branch with $N$ cases for each cell.
The resulting code is still polynomial in $N$.

\noindent$\bullet$ \emph{Verifying problems solution in EXPSPACE}:\\

For conditions~(1),~(3),~(4), we nondeterministically guess how to reach a game state for which the condition does not hold, and then verify it in a straightforward way.

For condition~(2), we do the same, but verifying requires guessing a move (we guess tags one by one) and simultaneously reach two next game states.

For condition~(5), we use a counter in exponential space, just as for the existence of a legal move. Overflowing the counter means that a game state has been repeated, thus the game is not finite.

\noindent$\bullet$ \emph{Verifying problems solution in PSPACE when the type length is fixed}:\\

It works in the same way as above, just fitting in polynomial space because game states have polynomial size.

\noindent$\bullet$ \emph{Verifying problems EXPSPACE-hardness}:\\

For each condition, it is trivial to construct a part of the automaton with a node such that the condition is violated if we reach a game state with that node.

We first reduce to the problem of the existence of a legal move.
Then we modify it as follows.
At the node after that move, we append a part violating the condition.
Also, we add an edge that ends a trivial move and ends the game, ensuring that a legal move from the beginning always exists.
Hence, the game description is proper if and only if there does not exist a legal move for the given description.
\end{proof}


\newpage
\section{Optimizations}

Transformations can be categorized into five types:

\begin{enumerate}
    \item \textbf{Expression-oriented:} These optimizations simplify the automaton by propagating constants, inlining variable assignments, merging nested access expressions, and compacting a series of comparisons.
    \item \textbf{Joins:} These transformations detect local patterns in the automaton, such as forks with common or exclusive actions.
    \item \textbf{Prunings:} These optimizations remove unused variables, constants, and unreachable edges.
    \item \textbf{Reachability-oriented:} The goal of these transformations is to simplify and inline reachability checks when it is legal.
    \item \textbf{Normalizations:} These transformations do not necessarily optimize the game. Examples include adding explicit casts in expressions and constant normalization.
\end{enumerate}

\subsection{Expression-oriented Optimizations}

\subsubsection{Compact comparisons} 
Optimizes selective comparisons with negations if it would result in a smaller automaton.

\begin{figure}[!h]
  \centering
  \begin{minipage}[b]{0.4\textwidth}
    \begin{lstlisting}[style=RG, gobble=8]
        type A = { 1, 2, 3 };
        var x: A = 1;
        begin, end: x == 1;
        begin, end: x == 2;
    \end{lstlisting}
    \caption{Before compact comparisons}
  \end{minipage}
  \begin{minipage}[b]{0.4\textwidth}
    \begin{lstlisting}[style=RG, gobble=8]
        type A = { 1, 2, 3 };
        var x: A = 1;
        begin, end: x != 3;
    \end{lstlisting}
    \caption{After compact comparisons}
  \end{minipage}
\end{figure}

\subsubsection{Inline assignment}
For each assignment \RG!x=expr!, collect all references to \RG!x! until it is reassigned. If the value of all variables used in \RG!expr! is the same at the assignment point and at every usage of \RG!x!, remove this assignment and replace each usage of \RG!x! with \RG!expr!.

\begin{figure}[!h]
  \centering
  \begin{minipage}[b]{0.4\textwidth}
    \begin{lstlisting}[style=RG, gobble=8]
        begin, a: coordX = board[x];
        a, b: coordX != coordY;
        b, c: coordX = up[coordX];
    \end{lstlisting}
    \caption{Before inline assignment}
  \end{minipage}
  \begin{minipage}[b]{0.4\textwidth}
    \begin{lstlisting}[style=RG, gobble=8]
        begin, a: ;
        a, b: board[x] != coordY;
        b, c: coordX = up[board[x]];
    \end{lstlisting}
    \caption{After inline assignment}
  \end{minipage}
\end{figure}

\subsubsection{Merge accesses}
For each expression \RG!const1[const2[UNDERSCORE]]! where \RG!const1! is a constant of type \RG!B->C! and \RG!const2! is a constant of type \RG!A->C!, create new constant \RG!const1UNDERSCOREconst2! of type \RG!A->C! and replace with it all usages of \RG!const1[const2[UNDERSCORE]]!.

\begin{figure}[!h]
  \centering
  \begin{minipage}[b]{0.4\textwidth}
    \begin{lstlisting}[style=RG, gobble=8]
        const MapAB: A -> B = { 1: 1, :2 };
        const MapBC: B -> C = { 1: 2, 2: 3, :4 };
        begin, end: x = MapBC[MapAB[1]];
    \end{lstlisting}
    \caption{Before merge accesses}
  \end{minipage}
  \begin{minipage}[b]{0.4\textwidth}
    \begin{lstlisting}[style=RG, gobble=8]
        const MapBC_MapAB: A -> C = 
            { 1: 2, :3 };
        begin, end: x = MapBC_MapAB[1];
    \end{lstlisting}
    \caption{After merge accesses}
  \end{minipage}
\end{figure}

\subsubsection{Propagate constants}
Inlines constants and variables of known, constant value.

\begin{figure}[!h]
  \centering
  \begin{minipage}[b]{0.4\textwidth}
    \begin{lstlisting}[style=RG, gobble=8]
        type A = { 1, 2, 3, 4 };
        const down: A -> A = 
            { 4:3, 3:2, :1 };
        var x: A = 3;
        var y: A = 1;
        begin, a: y = A(*);
        a, b: y == down[x];
    \end{lstlisting}
    \caption{Before propagate constants}
  \end{minipage}
  \begin{minipage}[b]{0.4\textwidth}
    \begin{lstlisting}[style=RG, gobble=8]
        type A = { 1, 2, 3, 4 };
        const down: A -> A = 
            { 4:3, 3:2, :1 };
        var x: A = 3;
        var y: A = 1;
        begin, a: y = A(*);
        a, b: y == 2;
    \end{lstlisting}
    \caption{After propagate constants}
  \end{minipage}
\end{figure}

\subsubsection{Reorder conditions}
This optimization is unsafe because it can change the semantics of the game.
If a certain edge label appears on multiple paths starting at a given node, move edges with this label to just after that node.

\begin{figure}[!h]
  \centering
  \begin{minipage}[b]{0.4\textwidth}
    \begin{lstlisting}[style=RG, gobble=8]
        begin, a: 4 == 4;
        begin, b: 2 == 2;
        a, a1: 1 == 1;
        b, b1: 4 == 4;
    \end{lstlisting}
    \caption{Before reorder conditions}
  \end{minipage}
  \begin{minipage}[b]{0.4\textwidth}
    \begin{lstlisting}[style=RG, gobble=8]
        begin, a: 4 == 4;
        begin, b: 4 == 4;
        a, a1: 1 == 1;
        b, b1: 2 == 2;
    \end{lstlisting}
    \caption{After reorder conditions}
  \end{minipage}
\end{figure}

\subsubsection{Skip self assignments and comparisons}
Replaces self assignments \RG!x=x! and self comparisons \RG!x==x! with skip edges.

\begin{figure}[!h]
  \centering
  \begin{minipage}[b]{0.4\textwidth}
    \begin{lstlisting}[style=RG, gobble=8]
        begin, a1: x == x;
        begin, a2: x != x;
        begin, a3: x = x;
    \end{lstlisting}
    \caption{Before skip self assignments and comparisons}
  \end{minipage}
  \begin{minipage}[b]{0.4\textwidth}
    \begin{lstlisting}[style=RG, gobble=8]
        begin, a1: ;
        begin, a3: ;
    \end{lstlisting}
    \caption{After skip self assignments and comparisons}
  \end{minipage}
\end{figure}

\subsection{Joins}

\subsubsection{Join exclusive edges}
Joins multiedges with exclusive labels.

\begin{figure}[!h]
  \centering
  \begin{minipage}[b]{0.4\textwidth}
    \begin{lstlisting}[style=RG, gobble=8]
        begin, end: ? a -> b;
        begin, end: ! a -> b;
        c, d: x == y;
        c, d: x != y;
    \end{lstlisting}
    \caption{Before join exclusive edges}
  \end{minipage}
  \begin{minipage}[b]{0.4\textwidth}
    \begin{lstlisting}[style=RG, gobble=8]
        begin, end: ;
        c, d: ;
    \end{lstlisting}
    \caption{After join exclusive edges}
  \end{minipage}
\end{figure}

\subsubsection{Join fork prefixes}
Joins paths with identical labels from the same node.

\begin{figure}[!h]
  \centering
  \begin{minipage}[b]{0.4\textwidth}
    \begin{lstlisting}[style=RG, gobble=8]
        begin, b: 1 == 1;
        begin, c: 1 == 1;
        b, end: 2 == 2;
        c, d: ;
    \end{lstlisting}
    \caption{Before join fork prefixes}
  \end{minipage}
  \begin{minipage}[b]{0.4\textwidth}
    \begin{lstlisting}[style=RG, gobble=8]
        begin, b: 1 == 1;
        b, end: 2 == 2;
        c, d: ;
        b, c: ;
    \end{lstlisting}
    \caption{After join fork prefixes}
  \end{minipage}
\end{figure}

\subsubsection{Join fork suffixes}
Joins paths with identical labels leading to the same node.

\begin{figure}[!h]
  \centering
  \begin{minipage}[b]{0.4\textwidth}
    \begin{lstlisting}[style=RG, gobble=8]
        begin, a1: x == 1;
        begin, a2: x == 2;
        a1, end: 0 == 0;
        a2, end: 0 == 0;
    \end{lstlisting}
    \caption{Before join exclusive edges}
  \end{minipage}
  \begin{minipage}[b]{0.4\textwidth}
    \begin{lstlisting}[style=RG, gobble=8]
        begin, a1: x == 1;
        begin, a1: x == 2;
        a1, end: 0 == 0;
    \end{lstlisting}
    \caption{After join exclusive edges}
  \end{minipage}
\end{figure}

\subsection{Prunnings}

\subsubsection{Compact skip edges}
Removes skip edges when possible.

\begin{figure}[!h]
  \centering
  \begin{minipage}[b]{0.4\textwidth}
    \begin{lstlisting}[style=RG, gobble=8]
        begin, b: 1 == 1; 
        b, c: ; 
        c, end: 2 == 2;
    \end{lstlisting}
    \caption{Before compact skip edges}
  \end{minipage}
  \begin{minipage}[b]{0.4\textwidth}
    \begin{lstlisting}[style=RG, gobble=8]
        begin, c: 1 == 1; 
        c, end: 2 == 2;
    \end{lstlisting}
    \caption{After compact skip edges}
  \end{minipage}
\end{figure}

\subsubsection{Prune unreachable nodes}
Removes edges and nodes that are not reachable from the \RG!begin! node or are not on the path to either \RG!end! node or a reachability target.

\begin{figure}[!h]
  \centering
  \begin{minipage}[b]{0.4\textwidth}
    \begin{lstlisting}[style=RG, gobble=8]
        begin, b: ? r1 -> target;
        b, end: ;
        a, end: ;
        r1, r2: ;
        r1, target: ;
    \end{lstlisting}
    \caption{Before prune unreachable nodes}
  \end{minipage}
  \begin{minipage}[b]{0.4\textwidth}
    \begin{lstlisting}[style=RG, gobble=8]
        begin, b: ? r1 -> target;
        b, end: ;
        r1, target: ;
    \end{lstlisting}
    \caption{After prune unreachable nodes}
  \end{minipage}
\end{figure}

\subsubsection{Prune unused constants and variables}
Removes unused constants and variables.

\subsubsection{Skip artificial tags}
Skips edges with tags marked with the $@artificialTag <tag>$ pragma.
This optimization runs at the very end, so that no further transformations take place after artificial tags are removed.

\begin{figure}[!h]
  \centering
  \begin{minipage}[b]{0.4\textwidth}
    \begin{lstlisting}[style=RG, gobble=8]
        begin, end: DOLLAR t;
        @aritificialTag t;
    \end{lstlisting}
    \caption{Before skip artificial tags}
  \end{minipage}
  \begin{minipage}[b]{0.4\textwidth}
    \begin{lstlisting}[style=RG, gobble=8]
        begin, end: ;
        @aritificialTag t;
    \end{lstlisting}
    \caption{After skip artificial tags}
  \end{minipage}
\end{figure}

\subsubsection{Skip redundant tags}
Skips edges with tags that are not needed when choosing a move - they appear in every possible choice at the same position.

\begin{figure}[!h]
  \centering
  \begin{minipage}[b]{0.4\textwidth}
    \begin{lstlisting}[style=RG, gobble=8]
        begin, a: DOLLAR t1.
        begin, b: DOLLAR t2;
        begin, c: DOLLAR t3;
        a, end: DOLLAR rt;
        b, end: DOLLAR rt;
        c, end: DOLLAR rt;
    \end{lstlisting}
    \caption{Before skip redundant tags}
  \end{minipage}
  \begin{minipage}[b]{0.4\textwidth}
    \begin{lstlisting}[style=RG, gobble=8]
        begin, a: DOLLAR t1.
        begin, b: DOLLAR t2;
        begin, c: DOLLAR t3;
        a, end: ;
        b, end: ;
        c, end: ;
    \end{lstlisting}
    \caption{After skip redundant tags}
  \end{minipage}
\end{figure}

\subsection{Reachability-oriented Optimizations}

\subsubsection{Compact reachability }
Moves leading and trailing non-conditional edges out of reachability checks.

\begin{figure}[!h]
  \centering
  \begin{minipage}[b]{0.4\textwidth}
    \begin{lstlisting}[style=RG, gobble=8]
        a, b: ? x -> y;
        x0, x1: ;
        x1, x2: coordX == coordY;
        x2, y0: ;
    \end{lstlisting}
    \caption{Before compact reachability}
  \end{minipage}
  \begin{minipage}[b]{0.4\textwidth}
    \begin{lstlisting}[style=RG, gobble=8]
        a, b: ? x1 -> x2;
        x0, x1: ;
        x1, x2: coordX == coordY;
        x2, y0: ;
    \end{lstlisting}
    \caption{After compact reachability}
  \end{minipage}
\end{figure}

\subsubsection{Inline reachability}
Inlines reachability subautomaton in place of reachability check. 
All variables changed inside the subautomaton must be reassigned before being read.

\begin{figure}[!h]
  \centering
  \begin{minipage}[b]{0.4\textwidth}
    \begin{lstlisting}[style=RG, gobble=8]
        begin, end: ? a -> c;
        a, b: 1 == 1;
        b, c: 2 == 2;
    \end{lstlisting}
    \caption{Before inline reachability}
  \end{minipage}
  \begin{minipage}[b]{0.4\textwidth}
    \begin{lstlisting}[style=RG, gobble=8]
        begin, a: ;
        a, b: 1 == 1;
        b, end: 2 == 2;
    \end{lstlisting}
    \caption{After inline reachability}
  \end{minipage}
\end{figure}

\subsubsection{Skip unused tags}
Removes tags inside reachability subautomatons.

\begin{figure}[!h]
  \centering
  \begin{minipage}[b]{0.4\textwidth}
    \begin{lstlisting}[style=RG, gobble=8] 
        begin, end: ? t1 -> t3;
        t1, t2: DOLLAR 1;
        t2, t3: DOLLAR 2;
    \end{lstlisting}
    \caption{Before skip unused tags}
  \end{minipage}
  \begin{minipage}[b]{0.4\textwidth}
    \begin{lstlisting}[style=RG, gobble=8]
        begin, end: ? t1 -> t3;
        t1, t2: ;
        t2, t3: ;
    \end{lstlisting}
    \caption{After skip unused tags}
  \end{minipage}
\end{figure}

\subsection{Other Transformations}

\subsubsection{Add explicit casts}
Adds type casts to every expression.

\begin{figure}[!h]
  \centering
  \begin{minipage}[b]{0.4\textwidth}
    \begin{lstlisting}[style=RG, gobble=8] 
        type T = { 1 };
        var t: T = 1;
        x1, y1: t == t;
    \end{lstlisting}
    \caption{Before add explicit casts}
  \end{minipage}
  \begin{minipage}[b]{0.4\textwidth}
    \begin{lstlisting}[style=RG, gobble=8]
        type T = { 1 };
        var t: T = 1;
        x1, y1: T(t) == T(t);
    \end{lstlisting}
    \caption{After add explicit casts}
  \end{minipage}
\end{figure}

\subsubsection{Expand any assignment}
For each edge with assignment \RG!x = A(*)! and for each member of type \RG!A!, creates a new edge assigning that member to \RG!x!.

\begin{figure}[!h]
  \centering
  \begin{minipage}[b]{0.4\textwidth}
    \begin{lstlisting}[style=RG, gobble=8]
        type Coord = { 0, 1, 2 };
        a, b: coordX = Coord(*); 
        b, c: coordY = Coord(*);
    \end{lstlisting}
    \caption{Before expand any assignment}
  \end{minipage}
  \begin{minipage}[b]{0.4\textwidth}
    \begin{lstlisting}[style=RG, gobble=8]
        type Coord = { 0, 1, 2};
        a, b: coordX = 0;
        a, b: coordX = 1;
        a, b: coordX = 2;
        b, c: coordY = 0;
        b, c: coordY = 1;
        b, c: coordY = 2;
    \end{lstlisting}
    \caption{After expand any assignment}
  \end{minipage}
\end{figure}

\subsubsection{Expand variable tags}
For each edge with a variable tag \RG!DOLLAR DOLLAR x!, where \RG!x! is a variable of type !A!, and for each member \RG!a! of type \RG!A!, creates a new edge with tag \RG!DOLLAR a!, preceded with comparison \RG!x == A(a)!.

\begin{figure}[!h]
  \centering
  \begin{minipage}[b]{0.4\textwidth}
    \begin{lstlisting}[style=RG, gobble=8]
        type Coord = { 0, 1, 2 };
        a, b: DOLLAR DOLLAR coord;
    \end{lstlisting}
    \caption{Before expand variable tag}
  \end{minipage}
  \begin{minipage}[b]{0.4\textwidth}
    \begin{lstlisting}[style=RG, gobble=8]
        type Coord = { 0, 1, 2 };
        a, b0: coord == Coord(0);
        b0, b: DOLLAR 0;
        a, b1: coord == Coord(1);
        b1, b: DOLLAR 1;
        a, b2: coord == Coord(2);
        b2, b: DOLLAR 2;
    \end{lstlisting}
    \caption{After expand variable tag}
  \end{minipage}
\end{figure}

\subsubsection{Mangle symbols}
Replaces symbol names with new values.

\begin{figure}[!h]
  \centering
  \begin{minipage}[b]{0.4\textwidth}
    \begin{lstlisting}[style=RG, gobble=8]
        begin,end: playerTurn = Player(X);
        turn,move: ? move -> set;
    \end{lstlisting}
    \caption{Before mangle symbols}
  \end{minipage}
  \begin{minipage}[b]{0.4\textwidth}
    \begin{lstlisting}[style=RG, gobble=8]
        begin, end: _t = Player(X);
        _w, _u: ? _u -> _v;
    \end{lstlisting}
    \caption{After mangle symbols}
  \end{minipage}
\end{figure}

\subsubsection{Normalize constants}
Make maps appear only at the top level.

\begin{figure}[!h]
  \centering
  \begin{minipage}[b]{0.4\textwidth}
    \begin{lstlisting}[style=RG, gobble=8]
        var X: Bool -> Bool -> Bool
            = { :{ :0 } };
    \end{lstlisting}
    \caption{Before normalize constants}
  \end{minipage}
  \begin{minipage}[b]{0.4\textwidth}
    \begin{lstlisting}[style=RG, gobble=8]
        const HoistedUNDERSCORE1: Bool -> Bool
            = { :0 };
        const HoistedUNDERSCORE2: Bool -> Bool -> Bool
            = { :HoistedUNDERSCORE1 };
        var X: Bool -> Bool -> Bool
            = HoistedUNDERSCORE2;
    \end{lstlisting}
    \caption{After normalize constants}
  \end{minipage}
\end{figure}

\subsubsection{Normalize types}
Make arrow types appear only in the definition and be at most one level deep.

\begin{figure}[!h]
  \centering
  \begin{minipage}[b]{0.4\textwidth}
    \begin{lstlisting}[style=RG, gobble=8]
        type X = { x } -> { y };
    \end{lstlisting}
    \caption{Before normalize types}
  \end{minipage}
  \begin{minipage}[b]{0.4\textwidth}
    \begin{lstlisting}[style=RG, gobble=8]
        type X = Type1 -> Type2;
        type Type1 = { x };
        type Type2 = { y };
    \end{lstlisting}
    \caption{After normalize types}
  \end{minipage}
\end{figure}

\subsection{Data--flow Analysis}

The analysis uses an iterative worklist algorithm and is generalized for multiple use cases.

\begin{lstlisting}
pub trait Analysis {
    type Domain: Clone + PartialEq;

    fn bot() -> Self::Domain;
    fn extreme(program: &Game<Id>) -> Self::Domain;
    fn gen(input: Self::Domain, edge: &Arc<Edge<Id>>) -> Self::Domain
    fn join(a: Self::Domain, b: Self::Domain) -> Self::Domain;
    fn kill(input: Self::Domain, edge: &Arc<Edge<Id>>) -> Self::Domain

    fn transfer(input: Self::Domain, edge: &Arc<Edge<Id>>) -> Self::Domain {
        Self::gen(Self::kill(input, edge), edge)
    }
}
\end{lstlisting}

The \Rust!type Domain! defines the type of knowledge. 
\Rust!extreme! and \Rust!bot! represent initial knowledge respectively in the \Rust!begin! node and in every other node. 
Functions \Rust!gen! and \Rust!kill! define how the knowledge changes when passing an edge, as seen in \Rust!transfer!.

\subsection{Optimizations results}

\begin{figure}[h]
    \centering
    \includegraphics[width=.9\textwidth]{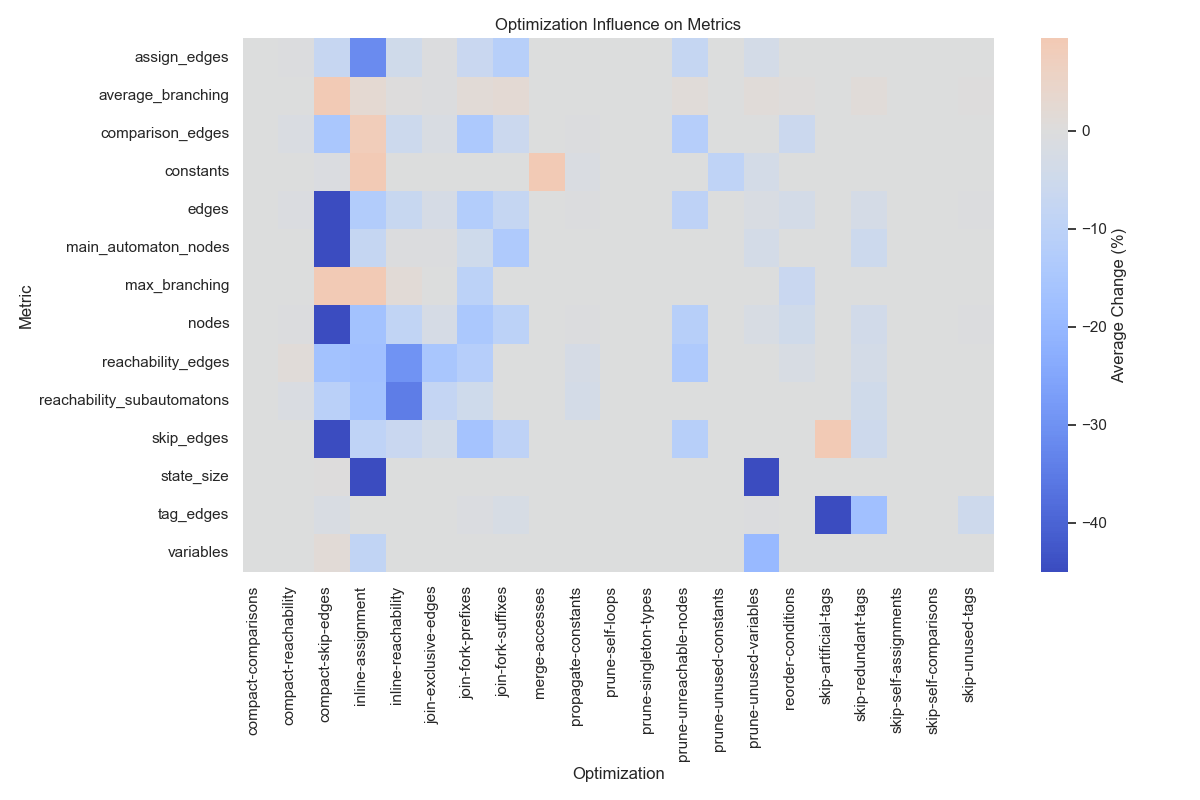}
    \caption{Effect of optimizations on various metrics}
    \label{fig:optimizations_metrics}
\end{figure}

\begin{figure}
    \centering
    \vspace{1cm}
    \includegraphics[width=.9\textwidth]{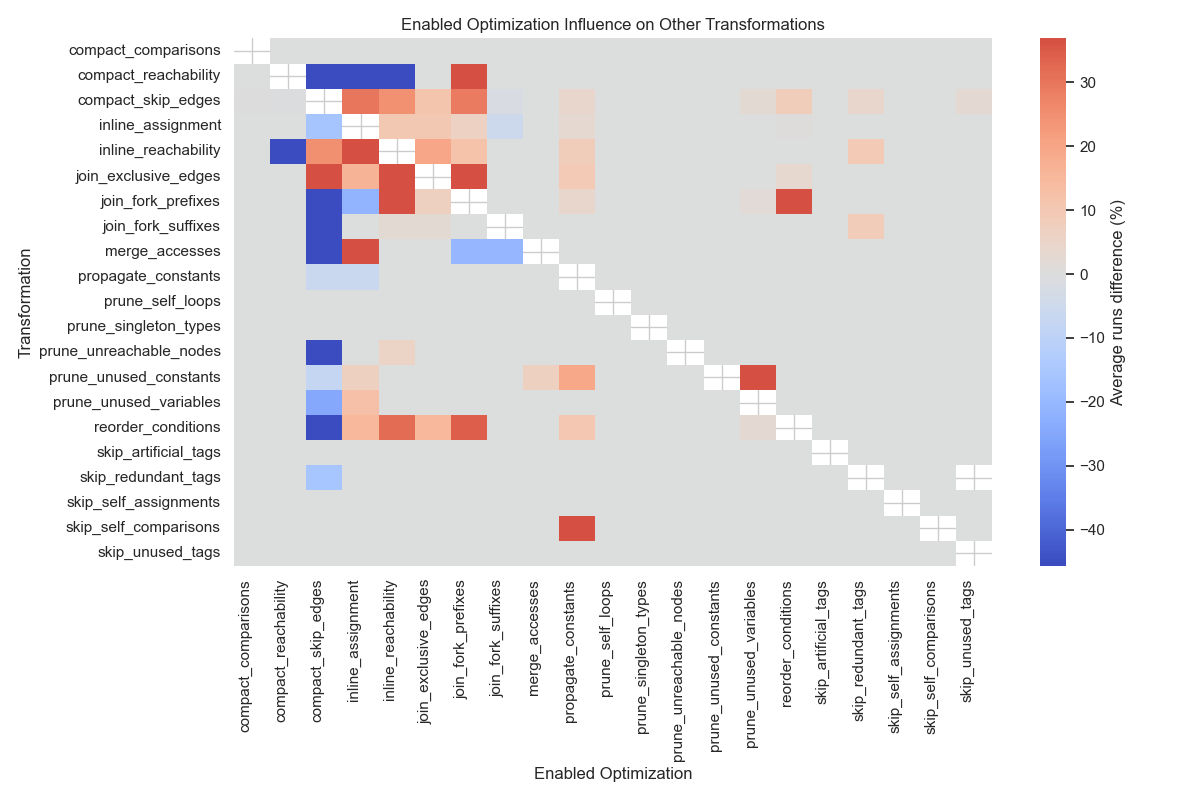}
    \caption{How one optimization enables another}
    \label{fig:optimizations_enabling}
    \vspace{1cm}
\end{figure}



\newpage
\section{Compiler}
The compiler creates files containing implementation of reasoner in C++ based on the AST.
The following section presents details of the implementation and on what basis it is generated.

\subsection{Types of Functions in Reasoner}
Generated forward model consists of a set of functions that allow for the representation of the game state and its modification.
The most important of these functions are:

\subsubsection{Checking terminal state}
A play in every game is always finished after reaching \RG{end} node of automaton, so verifying if terminal state is reached is trivial - numeric id of current node is compared with numeric id of \RG{end}.
The representation of node in the *.rg, *.hrg is a string, and it is changed to a number in the compiler.
    \begin{lstlisting}[language=c++, gobble=4]
    bool GameState::isTerminal()
    \end{lstlisting}

\subsubsection{Getting outcome for players}
This functionality should be used only when the terminal state is reached.
It simply reads a value from the special variable \RG!goals:Goals!.
    \begin{lstlisting}[language=c++, gobble=4]
    Score GameState::getPlayerScore(Player player)
    \end{lstlisting}

\subsubsection{Getting current player control}
Current player can be any of the players defined in implemented game or any of the special ones: \RG!keeper! or \RG!random!.
    \begin{lstlisting}[language=c++, gobble=4]
    PlayerOrSystem GameState::getCurrentPlayer()
    \end{lstlisting}

\subsubsection{Calculating legal moves}
Generating legal moves is based on the search of automaton by applying valid actions and collecting each unique labeling from every possible move walk.
This function starts the search and puts all possible moves from the current position into the vector:
    \begin{lstlisting}[language=c++, gobble=4]
    void GameState::getAllMoves(std::vector<Move>& moves, RgCache& rgCache)
    {
        rgCache.reset();
        moves.clear();
        Move mr;
        switch (currentState)
        {
            case 46:
            {
                state_begin(moves, mr.mr);
                return;
            }
            case 49:
            {
                state_move_forward(moves, mr.mr);
                return;
            }
        }
    }
    \end{lstlisting}

In order to reduce size of code, is the switch statement the only accepted node IDs are those nodes, before which a player change occurs and the node \RG{begin} from which the game starts, because these are the only nodes from which move application might take place.

Each node $q \in Q$ of automaton has its own function \emph{state\_q} which contains the execution of actions on the edges outgoing from this node and calling functions for successor nodes.
The tags encountered during the automaton's transition are saved to the \emph{mr} (move representation) container, and then when a player assignment action is encountered, the move is added to the set of possible moves stored in the \emph{moves} variable.

The search algorithm needs to recognize if a state of the game has already been visited.
This information is stored inside \emph{RGCache} structure which might be implemented differently for various games.

\subsubsection{Move application}
The move application is implemented as finding a move walk corresponding to the given move and applying all transitions from this walk.
    \begin{lstlisting}[language=c++, gobble=4]
    void GameState::applyMove(const Move& mr, RgCache& rgCache)
    {
        rgCache.reset();
        switch (currentState)
        {
            case 46:
            {
                apply_state_begin(mr.mr, rgCache);
                return;
            }
            case 49:
            {
                apply_state_move_forward(mr.mr, rgCache);
                return;
            }
        }
    }
    \end{lstlisting}

Function \emph{apply\_state\_X} analogous to \emph{state\_X} -- but instead of considering all types of actions, we are only interested in the assignment and tag actions.
In the case of the tag action, it is compared with the tag from the movement container.
If the comparison fails, the function ends with the false result and the search continues through other edges.

\subsubsection{Move application for \RG!keeper!}
There might be a few possible paths for the \RG!keeper! to the next node where the \RG{player} is changed.
However, all of them must be equivalent and game state after \RG{keeper's} move is always the same.
Finding the only valid move walk is implemented in a separate function \emph{applyAnyMove}, which can be used only for \RG!keeper!.
Such approach works faster than generating and then applying a specific move.
    \begin{lstlisting}[language=c++, gobble=4]
    bool GameState::applyAnyMove(RgCache& rgCache)
    \end{lstlisting}

\subsubsection{Reachability check}
The reachability action \RG{?X->Y} requires checking whether it is possible to walk from node \RG{X} to node \RG{Y} with the current game semistate.
Functions \emph{is\_legal\_X\_Y\_Z} are created for each expression \RG{?X->Y} or \RG{!X->Y} and each node \RG{Z} which might be visited on the walk from source to destination node.
These functions are not created based on the whole graph, but only a subset of the graph edges that could be used to walk from vertex \RG{X} to \RG{Y}.
The signature \emph{is\_legal\_X\_Y\_Z} denotes the state function for node \RG{Z} in subgraph \RG{X->Y}.
Figure~\ref{fig:full-graph} shows the entire graph, while figure~\ref{fig:reduced-graph} shows the reduced graph for the expression \RG{1->7}

\begin{figure}[!h]
  \centering
  \begin{minipage}[b]{0.4\textwidth}
    \includegraphics[width=\textwidth]{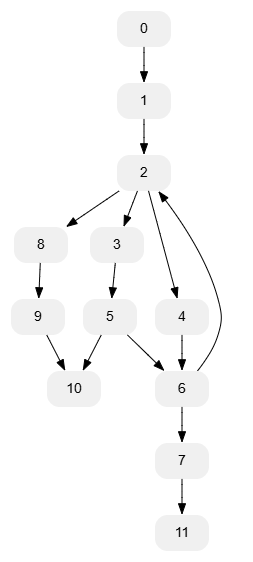}
    \caption{The whole graph}
    \label{fig:full-graph}
  \end{minipage}
  \hfill
  \begin{minipage}[b]{0.3\textwidth}
    \includegraphics[width=\textwidth]{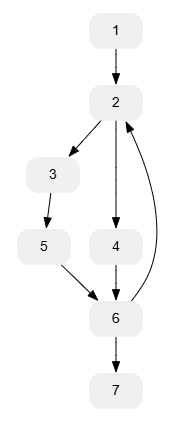}
    \caption{Reduced graph for pattern \RG{1->7}}
    \label{fig:reduced-graph}
  \end{minipage}
\end{figure}

    \begin{lstlisting}[language=c++, gobble=4]
    bool GameState::is_legal_checkline_endcheckline_checkline(RgCache& rgCache)
    \end{lstlisting}

\subsection{Handling Edge Actions}
\subsubsection{Assignment}
The assignment is handled as a normal assignment in C++.
Before the assignment operation is performed, a temporary variable is created that stores the old value, so it can be restored after traversing the edge.
    \begin{lstlisting}[language=c++, gobble=4]
    // action: position = V0;
    const auto tmp1 = position;
    position = V0;
    state_choose_position(moves, mr, rgCache);  // continue search in next node
    position = tmp1;
    \end{lstlisting}

Assigning value to variable \RG{player} is handled differently, which means that a complete move walk was found and current labeling is inserted to the vector of all moves.

\subsubsection{Any assignment}
Assigning any value is handled in a similar way as standard assignment.
The only difference is that such instruction is placed in a loop over all possible values of given type.

    \begin{lstlisting}[language=c++, gobble=4]
    // action: pos = Position(*);
    const auto tmp1 = pos;
    for (int posIt = v1; posIt <= v64; ++posIt)
    {
        pos = posIt;
        state_set_position(moves, mr, rgCache);  // continue search in next node
    }
    pos = tmp1;
    \end{lstlisting}

\subsubsection{Comparison}
Comparisons are handled in \emph{if} statement.
    \begin{lstlisting}[language=c++, gobble=4]
    // action: board[position] == whitePawn
    if (board[position] == whitePawn)
    {
        state_capture(moves, mr, rgCache)  // continue search in next node
    }
    \end{lstlisting}

\subsubsection{Tag}
Numeric id of tag is inserted to container representing a move.
    \begin{lstlisting}[language=c++, gobble=4]
    // action: $left
    void GameState::state_selectDirection(
        std::vector<Move>& moves, move_representation& mr, RgCache& rgCache)
    {
        mr.push_back(1);
    \end{lstlisting}

\subsubsection{Variable tag}
The value of variable with optional padding, needed to preserve uniqueness among values of all tags, is inserted to the container representing a move.
    \begin{lstlisting}[language=c++, gobble=4]
    // action: $position
    void GameState::state_selectPosition(
        std::vector<Move>& moves, move_representation& mr, RgCache& rgCache)
    {
        mr.push_back(position + 3);
    \end{lstlisting}

\subsubsection{Reachability}
Reachability check are implemented in dedicated functions \emph{is\_legal*} described previously and \emph{if} statement
    \begin{lstlisting}[language=c++, gobble=4]
    // action: ? move -> moved
    if (is_legal_move_moved_move(rgCache))
    \end{lstlisting}

\subsection{Pragmas Implementation}

\subsubsection{Pragma \RG!@disjoint! and \RG!@disjointExhaustive!}
            \begin{lstlisting}[language=c++, gobble=12]
                @disjointExhaustive node : nodes+;
                @disjoint node : nodes+;
                {disjoint X : x_1, x_2, ..., x_n}
            \end{lstlisting}
The \RG{Disjoint} pragma says that at most one of the edges between vertex \emph{X} and vertices \emph{x\_i} is valid.
Exhaustive variant says that exactly one is valid.
Given this information, if the first action between vertex \emph{X} and vertices \emph{x\_i} is a comparison or reachability check, then number of evaluated edges might be reduced.
For standard variant search might be stopped finding the first valid transition.
In the exhaustive version, checking action from the last edge can be skipped if the previous checked action fails.

  \begin{lstlisting}[language=c++, gobble=4]
    // Without Disjoint
    void GameState::state_checkwin(
        std::vector<Move>& moves, move_representation& mr, RgCache& rgCache)
    {
        if (is_legal_checkline_endcheckline_checkline(rgCache))
        {
            state_win(moves, mr, rgCache);
        }
        
        if (!is_legal_checkline_endcheckline_checkline(rgCache))
        {
            state_nextturn(moves, mr, rgCache);
        }
    }

    // With Disjoint
    void GameState::state_checkwin(
        std::vector<Move>& moves, move_representation& mr, RgCache& rgCache)
    {
        if (is_legal_checkline_endcheckline_checkline(rgCache))
        {
            state_win(moves, mr, rgCache);
            return;
        }

        if (!is_legal_checkline_endcheckline_checkline(rgCache))
        {
            state_nextturn(moves, mr, rgCache);
            return;
        }
    }

    // With DisjointExhaustive
    void GameState::state_checkwin(
        std::vector<Move>& moves, move_representation& mr, RgCache& rgCache)
    {
        if (is_legal_checkline_endcheckline_checkline(rgCache))
        {
            state_win(moves, mr, rgCache);
            return;
        }
        state_nextturn(moves, mr, rgCache);
    }
    \end{lstlisting}

\subsubsection{Pragma \RG!@tagIndex! and \RG!@tagIndexMax!}
            \begin{lstlisting}[language=c++, gobble=12]
                @tagIndex node+ : index;
                @tagIndexMax node+ : index;
            \end{lstlisting}

Both pragmas affect the type of container in which the set of tags representing the movement is stored.
The \RG{TagIndex} $(node, pos)$ pragma says that a tag on an edge originating from a given node will always appear at position pos in movement container.
\RG{TagIndexMax} pragma says that it can only appear at positions 0 to pos.
If all nodes in the graph have \RG{TagIndex} set, the compiler sets the container type to store movement to \emph{std::array} with size corresponding to the largest value assigned in \RG{TagIndex}.
If \RG{TagIndex} is not set for all nodes but \RG{TagIndexMax} is, the container type is \emph{boost\_container::static\_vector} with size being the maximum value given in \RG{TagIndexMax}.
If both of the above conditions are not met, the type \emph{boost\_container::small\_vector} will be set with static capacity 12.

\subsubsection{Pragma \RG!@simpleApply! and \RG!@simpleApplyExhaustive!}
            \begin{lstlisting}[language=c++, gobble=12]
                {StartingNode EndingNode [list of tags separated by commas - may be empty] 
                list of assignments separated by commas}
                
                @simpleApplyExhaustive rules_1 rules_2 
                    [pos_1: Position, L] 
                    pos = Position(pos_1),
                    player = PlayerOrSystem(keeper);
            \end{lstlisting}
If there is \RG{SimpleApply} pragma in which node X is the starting node then in its \emph{apply\_state\_X} function at the very beginning a call to the \emph{switch\_*} function is added. If \emph{switch\_*} returns true, then \emph{apply\_state\_X} will also return true. If it is \emph{SimpleApplyExhaustive}, the rest of the body of the function \emph{apply\_state\_X} is removed, and the only returned value is that of the function \emph{switch\_*}. The structure of the \emph{switch\_*} function looks like this - if \RG{SimpleApply} from this node had an empty string of tags, it could only be used to change the player, this condition is included in \emph{if (mr.size() == currentMrId)}, where \emph{mr} is a container containing a list of tags representing movement, \emph{currentMrId} is the currently considered position in the container. Then for non-empty tag lists, nested switch statements follow. Additionally in the case of \RG{SimpleApplyExhaustive} the last case condition is transformed to \emph{default}.

            \begin{lstlisting}[language=c++, gobble=12]
            bool GameState::switch_5(const move_representation& mr, RgCache& rgCache)
            {
                // Non empty tag lists
                if (mr.size() > currentMrId)
                {
                    switch (mr[currentMrId++])
                    {
                        case 269:
                        {
                            switch (mr[currentMrId++])
                            {
                                default:
                                    move_horizontal_direction = right;
                                    return apply_state_r2_t2_m27_move(mr,rgCache);
                            }
                        }
                        case 271:
                        {
                            move_horizontal_direction = right;
                            return apply_state_rules_2_turn_2_move(mr,rgCache);
                        }
            
                        default:
                            move_horizontal_direction = left;
                            return apply_state_rules_2_turn_2_move_2(mr,rgCache);
                    }
                }
                // Empty tag list
                if (mr.size() != currentMrId)
                {
                    return false;
                }
                stagnation = S__0;
                player = keeper;
                currentState = 1226;
                return true;
            }
            \end{lstlisting}

\subsubsection{Pragma \RG!@unique!}
While performing a search on automaton, it is necessary to check if current game state was already visited.
However, sometimes such situation cannot happen for some nodes.
If such nodes are recognized, they are be marked as \RG{unique} and then using cache is not needed in these nodes.

\subsubsection{Pragma \RG!@repeat!}
In general case, checking if some game state was already visited requires a comparison of the game semistate, a sequence of tags visited on the path from the source state to the current automaton state, and the current automaton state.
However, when a list of variables to compare can be reduced to some subset, or even an empty set, this information is encoded in pragma repeat as follows:
\begin{lstlisting}
@repeat node+ : variable+;
\end{lstlisting}
Compiler can use this information to provide a much faster cache for given automaton state implemented as a bitset instead of a hash table, or even a single boolean when the list of variables is empty.
Nevertheless, tracking list of visited tags is still necessary to distinguish paths representing different moves.

When tags are unique, i.e. each tag acion explicitly defines a transition of automaton, it is not necessary to track prefix of move to properly detect if given game state was already visited.
As a result, the cache structure can be optimized even more.
The only difference is that on every transition with tag action, the cache for automaton states listed in pragma repeat must be cleared.
It is still a more effective solution, because cache insertion and lookup operations are faster.

\subsubsection{Pragma \RG!@integer!}
RG supports neither arithmetic nor logical operations.
Such operations must be defined as constants of arrow types.

\begin{lstlisting}[style=HRG]
# HRG example
domain Counter = C(N) where N in 0..100
domain CounterOrNan = nan | Counter
counterIncrease: Counter -> CounterOrNan
counterIncrease(C(N)) = if N == 100 then nan else C(N + 1)
\end{lstlisting}

\begin{lstlisting}[style=RG]
# after translation to RG
@integer 0 : C__0 C__1 ... C__100;
type Counter = { C__0, C__1, ..., C__100 };
type CounterOrNan = { nan, C__0, C__1, ..., C__100 };
const counterIncrease: Counter -> CounterOrNan = { :nan, C__0: C__1, ..., C__99: C__100 };
\end{lstlisting}

Taking into account that symbols of some set type encode integers, it is possible to restore original operations without using constans of map types.
Currently, the following types of arithmetic operations are supported: addition, subtraction, incrementation and decrementation.
Each of them may appear in three variants: overflow, modular or saturation.
Supported logical operations are: less, less or equal, greater, greater or equal.

\subsubsection{Pragma \RG!@iterator!}
As defined before, any assignment operation generates a loop instruction which assigns every possible value from some set type to given variable.
Hovewer, right after such instruction there might be a comparison instruction which reduces the set of possible values.

\begin{lstlisting}[style=RG]
@iterator A B C : coord_temp;
type Coord = { rx0y0, rx1y0, ..., rx7y7 };
const ReachableMap: Coord -> Coord -> Bool = { 
    :{ :0 },
    rx0y1: { :0, rx2y0: 1 },
    rx1y1: { :0, rx3y0: 1 },
    ...,
    rx7y7: { :0, rx6y5: 1, rx5y6: 1 } };

A, B: coord_temp = Coord(*);
B, C: ReachableMap[coord][coord_temp] == 1;
\end{lstlisting}
The path from node \emph{A} to \RG{C} would normally be translated to the following C++ code:

\begin{lstlisting}[language=c++]
for (coord_temp = rx0y1; coord_temp <= rx7y7; ++coord_temp)
{
    if (ReachableMap[coord][coord_temp])
    {
        // instructions outgoing from node C
    }
}
\end{lstlisting}
With the information taken from pragma \RG!@iterator!, a new constant is created which stores iterator ranges for each value of \emph{coord} variable.

\begin{lstlisting}[language=c++]
std::array<std::vector<Coord>, 64> ReachableMapIter = {
    {rx2y0},        // rx0y1: { :0, rx2y0: 1 }
    {rx3y0},        // rx1y1: { :0, rx3y0: 1 }
    ...
    {rx6y5, rx5y6}  // rx7y7: { :0, rx6y5: 1, rx5y6: 1 } }
}

for (auto coord_temp_iter : ReachableMapIter[coord])
{
    coord_temp = coord_temp_iter;
    // no 'if' statement, only instructions outgoing from node C
}
\end{lstlisting}


\newpage
\section{Compatibility with Different Systems}

    \subsection{RBG}
        In RBG, a game description is based on a regular expression and a directed board graph with labelled edges.
        
        The board is encoded in a \RG!type Board=Coord->Piece!, where \RG!Coord! type is a set of all board nodes and \RG!Piece! type is a set of all pieces.
        To encode the transitions, we define a \RG!direction_L:Coord->Coord! for every board edge label \RG!L!.

        There are only three variables: \RG!var board:Board! that represents the current state of the board, \RG!var coord:Coord! that represents the current position on the board, and \RG!var counters:Piece->N! that represents the number of each piece on the board (\RG!N! is a set of integers between 0 and the number of positions).
        Every change on the board is reflected in \RG!counters! (games can use that information).

        To translate the regular expression, we follow Thompson's construction \cite{Thompson1968ProgrammingTechniques}.
        The Kleene star expression results in a loop, which is not optimal for series a of \emph{shifts} (\RG!coord! assignments).
        Instead, we recognize such shift patterns and translate them to more efficient lookups.
        However, some loops are irreducible -- often when shifts are combined with \emph{off}'s (\RG!board! checks).

        All math operations (e.g., adjusting \RG!counters! or operating on custom variables) can overflow.
        RBG's treats overflows as illegal, so all math operations are followed up by a \RG!nan! check.
        Similarly, all shifts may go over the board, which we signalize with and verify with \RG!null!.

        To match the ``no legal moves end the game'' semantic, all moves start with a reachability check, verifying whether the move is valid.
        (A failed check ends the game.)

        The \RG!Player! type matches the list of players in RBG.
        The \RG!Score! type is a set of integers from 0 to the maximum score defined in RBG (usually 100).

    \subsection{GDL}
        In GDL, a game description is a set of Boolean predicates and a game state is a set of facts.
        Translation to RG is based on propositional nets \cite{Schkufza2008Propositional}, which introduces a way of encoding boolean predicates in ter ms of a circuit-based formalism.

        Propositional nets require all predicates to be \emph{grounded}, i.e., without any variables.
        This process may result in an exponential growth of the number of predicates, which makes it infeasible for the most complex games.
        To optimize this process, we prune the static (i.e., always true) and impossible (i.e., always false) predicates during the process.

        Custom predicates are encoded as subautomata and evaluated using reachability checks.
        Negation (\texttt{not}) creates a subautomaton for its content and evaluates it using negated reachability checks.
        All \texttt{true} predicates are encoded as comparisons of the \RG!F_prev! variables with \RG!1! (defined below).

        The set of players and their actions are calculated based on the \texttt{legal} predicates.
        In RG, we define one variable for player's \RG!P! action (\RG!var does_P:does_P_type! initial value is arbitrary).
        Its domain consists of moves possible for that player (different players may have different possible moves).
        
        A player's \RG!P! move consists of one path for every possible action \RG!A!.
        It evaluates the \texttt{legal} predicate using reachability check, sets the \RG!does_P! variable, and tags it (\RG!DOLLARA!).
        Before that, \RG!keeper! sets the \RG!visibility! so that each player sees only their own moves to emulate the simultaneous moves in GDL.

        To transition between the states, we have to evaluate the logical rules defining the set of facts true in the next state.
        In RG, for every fact \RG!F! (i.e., \texttt{base} predicate in GDL) we define two boolean variables, representing its current and next value (\RG!var F_prev:Bool! and \RG!var FUNDERSCOREnext:Bool=0;!).
        The initial values of the current facts are based on the \texttt{init} predicates.
        Once all players select their actions, next values are evaluated using the \texttt{next} predicates (\RG!FUNDERSCOREnext! values are zeroed afterwards to prevent leaking hidden information).

        The \RG!Score! type is a set of all possible \texttt{goals} of the game.
        Once the next state is calculated, the \texttt{terminal} predicate is evaluated.
        When true, final \RG{goals} are calculated and the game ends; otherwise another turn starts.

\newpage
\section{Additional Results}
    All experiments were performed on AMD~Ryzen~9~3950X, 64GB, with Ubuntu 24.04.3 LTS, g++~14.2.0, and GraalVM 25.0.1+8.1 (for Ludii).
    
    \subsection{Translation and Optimization Speed}
    
        \begin{table}[h]
            \centering
            \begin{tabular}{lrrrr}
                \toprule
                Game & No optimizations & All optimizations \\
                \midrule
                    \texttt{alquerque.hrg}                     & 9 ms    & 36 ms \\
                    \texttt{amazons.hrg}                       & 15 ms   & 82 ms \\
                    \texttt{amazons\_split2.hrg}               & 15 ms   & 106 ms \\
                    \texttt{ataxx.hrg}                         & 16 ms   & 29 ms \\
                    \texttt{backgammon.hrg}                    & 90 ms   & 4233 ms \\
                    \texttt{battleships.hrg}                   & 10 ms   & 62 ms \\
                    \texttt{bombardment.hrg}                   & 11 ms   & 14 ms \\
                    \texttt{breakthrough.hrg}                  & 7 ms    & 15 ms \\
                    \texttt{chess.hrg}                         & 39 ms   & 1344 ms \\
                    \texttt{chess\_kingCapture.hrg}            & 23 ms   & 1009 ms \\
                    \texttt{chessTest1.hrg}                    & 16 ms   & 30 ms \\
                    \texttt{clobber.hrg}                       & 7 ms    & 10 ms \\
                    \texttt{connect4.hrg}                      & 5 ms    & 12 ms \\
                    \texttt{diceThrowCompare.hrg}              & 4 ms    & 6 ms \\
                    \texttt{diceThrowGuess.hrg}                & 3 ms    & 6 ms \\
                    \texttt{dotsAndBoxes.hrg}                  & 11 ms   & 17 ms \\
                    \texttt{englishDraughts.hrg}               & 9 ms    & 57 ms \\
                    \texttt{foxAndGeese.hrg}                   & 9 ms    & 7 ms \\
                    \texttt{foxAndGeese\_lud.hrg}              & 14 ms   & 56 ms \\
                    \texttt{gomoku\_freeStyle.hrg}             & 14 ms   & 60 ms \\
                    \texttt{gomoku\_standard.hrg}              & 16 ms   & 78 ms \\
                    \texttt{knightthrough.hrg}                 & 8 ms    & 16 ms \\
                    \texttt{oware.hrg}                         & 17 ms   & 631 ms \\
                    \texttt{pentago.hrg}                       & 35 ms   & 547 ms \\
                    \texttt{pentago\_split.hrg}                & 23 ms   & 535 ms \\
                    \texttt{satSolver.hrg}                     & 4 ms    & 6 ms \\
                    \texttt{shortestPath.hrg}                  & 5 ms    & 6 ms \\
                    \texttt{ticTacDie.hrg}                     & 5 ms    & 12 ms \\
                    \texttt{ticTacToe.hrg}                     & 4 ms    & 11 ms \\
                    \texttt{turingMachine.hrg}                 & 6 ms    & 10 ms \\
                    \texttt{twentyOne.hrg}                     & 18 ms   & 27 ms \\
                    \texttt{ultimateTicTacToe.hrg}             & 7 ms    & 26 ms \\
                \bottomrule
            \end{tabular}
            \caption{Analysis time of all HRG games with 60 seconds time limit.}
        \end{table}
        \newpage
        \begin{table}[h]
            \centering
            \begin{tabular}{lrrrr}
                \toprule
                Game & No optimizations & All optimizations \\
                \midrule
                \texttt{connect4.kif}                      & 32 ms   & 2042 ms \\
                \texttt{montyHall.kif}                     & 6 ms    & 53 ms \\
                \texttt{ticTacToe.kif}                     & 11 ms   & 240 ms \\
                \bottomrule
            \end{tabular}
            \caption{Analysis time of GDL games with 60 seconds time limit.}
        \end{table}
        \begin{table}
            \centering
            \scriptsize
            \begin{tabular}{lrrrr}
                \toprule
                Game & No optimizations & All optimizations \\
                \midrule
                \texttt{15puzzle.rbg}                       & 26 ms   & 1309 ms \\
                \texttt{alquerque\_lud.rbg}                 & 32 ms   & 552 ms \\
                \texttt{alquerque.rbg}                      & 33 ms   & 833 ms \\
                \texttt{amazons.rbg}                        & 25 ms   & 181 ms \\
                \texttt{amazons\_split2a.rbg}               & 42 ms   & 653 ms \\
                \texttt{amazons\_split2.rbg}                & 26 ms   & 174 ms \\
                \texttt{amazons\_split3.rbg}                & 26 ms   & 167 ms \\
                \texttt{amazons\_split5plus.rbg}            & 39 ms   & 340 ms \\
                \texttt{amazons\_split5.rbg}                & 31 ms   & 222 ms \\
                \texttt{arimaa\_fixedPosition.rbg}          & 1738 ms & timeout \\
                \texttt{arimaa.rbg}                         & 1915 ms & timeout \\
                \texttt{arimaa\_split.rbg}                  & 4193 ms & timeout \\
                \texttt{breakthrough\_10x10.rbg}            & 16 ms   & 48 ms \\
                \texttt{breakthrough\_11x11.rbg}            & 18 ms   & 51 ms \\
                \texttt{breakthrough\_12x12.rbg}            & 24 ms   & 62 ms \\
                \texttt{breakthrough\_5x5.rbg}              & 8 ms    & 28 ms \\
                \texttt{breakthrough\_6x6.rbg}              & 8 ms    & 29 ms \\
                \texttt{breakthrough\_7x7.rbg}              & 8 ms    & 32 ms \\
                \texttt{breakthrough\_9x9.rbg}              & 14 ms   & 39 ms \\
                \texttt{breakthrough.rbg}                   & 10 ms   & 37 ms \\
                \texttt{breakthrough\_split.rbg}            & 10 ms   & 43 ms \\
                \texttt{breakthru.rbg}                      & 82 ms   & 541 ms \\
                \texttt{breakthru\_split.rbg}               & 83 ms   & 467 ms \\
                \texttt{canadianDraughts.rbg}               & 99 ms   & 3587 ms \\
                \texttt{chess\_200.rbg}                     & 99 ms   & 2503 ms \\
                \texttt{chessCylinder\_kingCapture.rbg}     & 45 ms   & 1332 ms \\
                \texttt{chessCylinder.rbg}                  & 85 ms   & 3000 ms \\
                \texttt{chessGardner5x5\_kingCapture.rbg}   & 34 ms   & 1306 ms \\
                \texttt{chess\_kingCapture\_200.rbg}        & 45 ms   & 1045 ms \\
                \texttt{chess\_kingCapture.rbg}             & 45 ms   & 1370 ms \\
                \texttt{chessLosAlamos6x6\_kingCapture.rbg} & 25 ms   & 562 ms \\
                \texttt{chessQuick5x6\_kingCapture.rbg}     & 28 ms   & 856 ms \\
                \texttt{chess.rbg}                          & 88 ms   & 3045 ms \\
                \texttt{chessSilverman4x5\_kingCapture.rbg} & 19 ms   & 419 ms \\
                \texttt{chineseCheckers6.rbg}               & 524 ms  & 1944 ms \\
                \texttt{connect4.rbg}                       & 12 ms   & 54 ms \\
                \texttt{connect6.rbg}                       & 137 ms  & 717 ms \\
                \texttt{connect6\_split.rbg}                & 136 ms  & 350 ms \\
                \bottomrule
            \end{tabular}
            \caption{Analysis time of all RBG games with 60 seconds time limit (part I).}
        \end{table}

        \begin{table}
            \centering
            \scriptsize
            \begin{tabular}{lrrrr}
                \toprule
                Game & No optimizations & All optimizations \\
                \midrule
                \texttt{dashGuti.rbg}                       & 32 ms   & 843 ms \\
                \texttt{doubleChess.rbg}                    & 180 ms  & 3728 ms \\
                \texttt{englishDraughts.rbg}                & 31 ms   & 584 ms \\
                \texttt{englishDraughts\_split.rbg}         & 33 ms   & 763 ms \\
                \texttt{foxAndHounds-10x10.rbg}             & 20 ms   & 37 ms \\
                \texttt{foxAndHounds-12x12.rbg}             & 41 ms   & 61 ms \\
                \texttt{foxAndHounds.rbg}                   & 10 ms   & 26 ms \\
                \texttt{gess.rbg}                           & 3774 ms & timeout \\
                \texttt{go\_constsum.rbg}                   & 326 ms  & 1741 ms \\
                \texttt{golSkuish.rbg}                      & 37 ms   & 807 ms \\
                \texttt{gomoku\_freeStyle.rbg}              & 53 ms   & 143 ms \\
                \texttt{gomoku\_standard\_11x11.rbg}        & 24 ms   & 89 ms \\
                \texttt{gomoku\_standard\_13x13.rbg}        & 37 ms   & 114 ms \\
                \texttt{gomoku\_standard.rbg}               & 56 ms   & 154 ms \\
                \texttt{go\_nopass.rbg}                     & 352 ms  & 1788 ms \\
                \texttt{go.rbg}                             & 168 ms  & 647 ms \\
                \texttt{hex\_10x10.rbg}                     & 17 ms   & 41 ms \\
                \texttt{hex\_5x5.rbg}                       & 9 ms    & 28 ms \\
                \texttt{hex\_6x6.rbg}                       & 8 ms    & 28 ms \\
                \texttt{hex\_7x7.rbg}                       & 9 ms    & 31 ms \\
                \texttt{hex\_8x8.rbg}                       & 10 ms   & 33 ms \\
                \texttt{hex\_9x9.rbg}                       & 13 ms   & 36 ms \\
                \texttt{hex.rbg}                            & 20 ms   & 48 ms \\
                \texttt{internationalDraughts.rbg}          & 78 ms   & 3186 ms \\
                \texttt{knightthrough.rbg}                  & 12 ms   & 34 ms \\
                \texttt{knightthrough\_split.rbg}           & 13 ms   & 40 ms \\
                \texttt{lauKataKati.rbg}                    & 30 ms   & 863 ms \\
                \texttt{paperSoccer.rbg}                    & 1429 ms & 4761 ms \\
                \texttt{pentago.rbg}                        & 126 ms  & 13859 ms \\
                \texttt{pentago\_split.rbg}                 & 123 ms  & 15024 ms \\
                \texttt{pretwa.rbg}                         & 31 ms   & 833 ms \\
                \texttt{reversi\_10x10.rbg}                 & 73 ms   & 33622 ms \\
                \texttt{reversi\_4x4.rbg}                   & 44 ms   & 30935 ms \\
                \texttt{reversi\_6x6.rbg}                   & 49 ms   & 32987 ms \\
                \texttt{reversi.rbg}                        & 58 ms   & 31946 ms \\
                \texttt{skirmish.rbg}                       & 51 ms   & 1605 ms \\
                \texttt{surakarta.rbg}                      & 36 ms   & 518 ms \\
                \texttt{theMillGame\_lud.rbg}               & 1275 ms & timeout \\
                \texttt{theMillGame.rbg}                    & 40 ms   & 1921 ms \\
                \texttt{theMillGame\_split.rbg}             & 38 ms   & 1709 ms \\
                \texttt{ticTacToe.rbg}                      & 8 ms    & 30 ms \\
                \texttt{yavalath.rbg}                       & 19 ms   & 62 ms \\
                \bottomrule
            \end{tabular}
            \caption{Analysis time of all RBG games with 60 seconds time limit (part II).}
        \end{table}

\end{document}